\def\eqref#1{equation~\ref{#1}}
\def\1{\bm{1}}
\DeclareMathAlphabet{\mathsfit}{\encodingdefault}{\sfdefault}{m}{sl}
\SetMathAlphabet{\mathsfit}{bold}{\encodingdefault}{\sfdefault}{bx}{n}
\newcommand{\R}{\mathbb{R}}
\newtheorem{theorem}{Theorem}
\newtheorem{lemma}{Lemma}
\newtheorem{definition}{Definition}
\newtheorem*{statement}{Statement}
\newcommand{\pdf}{\mathsf{pdf}}
\newcommand{\tv}{\mathsf{TV}}
\newcommand{\w}{\mathsf{W}}
\newcommand{\auc}{\mathsf{AUROC}}
\newcommand{\tpr}{\mathsf{TPR}}
\newcommand{\fpr}{\mathsf{FPR}}
\newcommand{\erf}{\mathsf{erf}}
\renewcommand{\R}{\mathcal{R}}
\newcommand{\F}{\mathcal{F}}
\newcommand{\N}{\mathcal{N}}
\newcommand{\PD}{\mathcal{P}}
\title{Robustness of AI-Image Detectors:\\ Fundamental Limits and Practical Attacks}
\author{Mehrdad Saberi$^1$, Vinu Sankar Sadasivan$^{1}$, Keivan Rezaei$^{1}$, Aounon Kumar$^1$,\\
\textbf{Atoosa Chegini$^1$, Wenxiao Wang$^1$, Soheil Feizi$^1$}
\vspace{1.5mm}\\
$^1$Department of Computer Science, University of Maryland
\vspace{1.5mm}\\
\small{\texttt{\{msaberi,vinu,krezaei,aounon,atoocheg,wwx,sfeizi\}@umd.edu}}
}
\begin{document}

\maketitle

\begin{abstract}

In light of recent advancements in generative AI models, it has become essential to distinguish genuine content from AI-generated one to prevent the malicious usage of fake materials as authentic ones and vice versa. Various techniques have been introduced for identifying AI-generated images, with watermarking emerging as a promising approach. In this paper, we analyze the robustness of various AI-image detectors including watermarking and classifier-based deepfake detectors.
For watermarking methods that introduce subtle image perturbations (i.e., low perturbation budget methods), we reveal a fundamental trade-off between the evasion error rate (i.e., the fraction of watermarked images detected as non-watermarked ones) and the spoofing error rate (i.e., the fraction of non-watermarked images detected as watermarked ones) upon an application of diffusion purification attack.
To validate our theoretical findings, we also provide empirical evidence demonstrating that diffusion purification effectively removes low perturbation budget watermarks by applying minimal changes to images.
For high perturbation watermarking methods where notable changes are applied to images, the diffusion purification attack is not effective.
In this case, we develop a model substitution adversarial attack that can successfully remove watermarks. Moreover, we show that watermarking methods are vulnerable to spoofing attacks where the attacker aims to have real images (potentially obscene) identified as watermarked ones, damaging the reputation of the developers. In particular, by just having black-box access to the watermarking method, we show that one can generate a watermarked noise image, which can be added to the real images, leading to their incorrect classification as watermarked.
Finally, we extend our theory to characterize a fundamental trade-off between the robustness and reliability of classifier-based deep fake detectors and demonstrate it through experiments. Code is available at 
\textcolor{magenta}{\href{https://github.com/mehrdadsaberi/watermark_robustness}{https://github.com/mehrdadsaberi/watermark\_robustness}}.

\end{abstract}

\section{Introduction}


As generative AI systems advance in sophistication and accessibility, the production of persuasive fabricated digital content becomes more accessible. These systems have the ability to craft hyper-realistic media forms such as images, videos, and audio (referred to as deepfakes), capable of deceiving viewers and listeners \citep{Helmus2022}. This misapplication of AI introduces potential hazards related to misinformation, fraud, and even national security issues like election manipulation \citep{Blauth2022, Chesney2019}. Moreover, deepfakes can result in personal harm, spanning from character defamation to emotional distress, impacting both individuals and broader society \citep{Ice2019DefamatoryPD}. Consequently, the identification of AI-generated content and, importantly, tracing its sources, emerges as a crucial challenge to address.

Over the years, numerous techniques for recognizing AI-generated images have emerged. Among these, Image watermarking stands out as a promising approach \citep{cox2002digital, 687830}.
Watermarking techniques, along with their many other applications \citep{1560462, zhao2023unlearnable, cui2023diffusionshield}, can be integrated with image generation models \citep{rombach2022high} to inject watermarks to AI-generated images, which enables them to be differentiated from real images later. These techniques also allow for tracing the source of generation for images. Given the continuous enhancement in deepfake image quality and the growing challenge of distinguishing them from real ones, the adoption of image watermarking over classifier-based detection techniques is becoming a more sensible choice.

In this paper, we demonstrate a fundamental constraint on the robustness of image watermarking methods. We leverage a technique called diffusion purification \citep{nie2022diffusion}, originally proposed as a defense against adversarial examples. This approach involves the introduction of Gaussian noise to images and utilizing the denoising process of diffusion models \citep{NEURIPS2020_4c5bcfec} to eliminate the added noise. We offer both theoretical and empirical evidence that this attack amplifies the error rates of watermarking methods that have a low Wasserstein distance between the distributions of their watermarked and non-watermarked images, which we refer to as ``low perturbation budget" watermarking methods; i.e., watermarks with subtle image perturbations.

To elaborate, if $\R$ and $\F$ represent the distributions of non-watermarked and watermarked images, and $\R^t$ and $\F^t$ denote the distributions of these images after the application of the diffusion purification attack, we demonstrate that:
$$e_0(\F^t,D) + e_1(\R^t,D) \geq  1 -\erf(\frac{\sqrt{\bar{\alpha_t}}\ \w(\R, \F)}{2\sqrt{2(1-\bar{\alpha_t})}}),$$
where $e_0$ and $e_1$ correspond to the evasion (type I) and spoofing (type II) errors of detector $D$ (i.e., formally defined in Definition~\ref{def:error_definition}), $\w(.,.)$ stands for the Wasserstein distance function, $\erf(.)$ is the Gauss error function, and $\bar{\alpha_t}$ represents the cumulative alpha of the diffusion model at step $t$. To complete our theoretical findings, we empirically show that diffusion purification attack can reduce the AUROC (Area Under the Receiver Operating Characteristic) of some existing low-perturbation watermarks \citep{zhang2019robust, cox2007digital, zhao2023recipe} to values less than $0.65$ by applying minimal changes to images.

\begin{figure}
    \centering
    \vspace{-0.7cm}
    \includegraphics[width=0.95\linewidth, trim = 0.0cm 0.0cm 0.0cm 0.0cm, clip,  ]{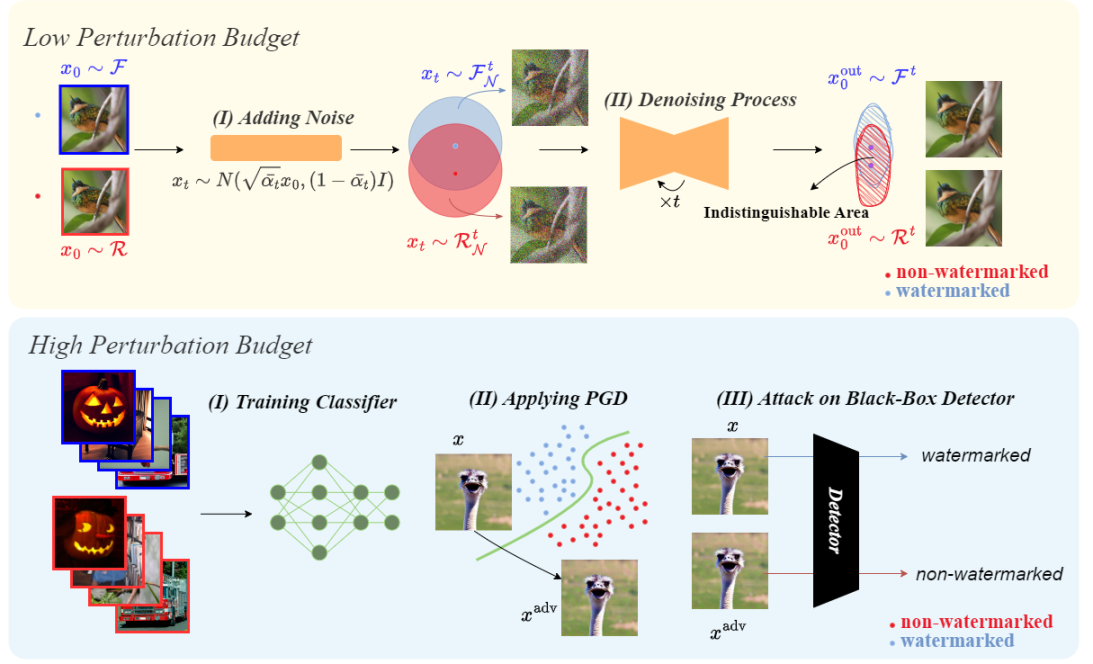}
    \vspace{-0.2cm}
    \caption{Illustration of our attacks against image watermarking methods. {\bf Upper panel }demonstrates the diffusion purification attack for low perturbation budget (imperceptible) watermarks. It adds Gaussian noise to images, creating an indistinguishable region, which results in a {\it certified} lower bound on the error of watermark detectors. Noisy images are then denoised using diffusion models. See Section~\ref{subsection:diffpure_theory} for the definition of the used terms (e.g., $\R$, $\F$). {\bf Lower panel} depicts our model substitute adversarial attack against high-perturbation budget watermarks. Our attack involves training a substitute classifier, conducting a PGD attack on the substitute model, and using these manipulated images to deceive the black-box watermark detector. 
    }
    \vspace{-0.2cm}
    \label{fig:watermark_main_fig}
\end{figure}


If the Wasserstein distance between the distributions of watermarked and non-watermarked images is large (i.e., high perturbation budget watermarking), our theoretical bound based on diffusion purification attack becomes vacuous. In fact, we also empirically observe that this attack does not compromise existing high perturbation budget watermarking methods where notable changes are applied to the images such as TreeRing \citep{wen2023tree} (Figure \ref{fig:auroc_diffpure_all}). In this regime, we develop a method that trains a substitute classifier capable of distinguishing between watermarked and non-watermarked images. Subsequently, we execute an adversarial attack \citep{madry2017towards} on images using this substitute classifier.
Intriguingly, these attacks appear to transfer successfully to the authentic watermark detector.
Our adversarial attack manages to decrease the AUROC of the TreeRing method to $0.14$ by employing an $\ell_\infty$ attack with $\epsilon=2/255$.
A distinguishing feature of our attack, in contrast to previously proposed white-box and black-box attacks \citep{jiang2023evading}, is that it does not necessitate real-time access to the watermark detector. Instead, it operates by collecting images watermarked by a specific watermark model from the internet.

We note that some watermarking methods such as StegaStamp \citep{tancik2020stegastamp} impose large perturbations in the latent (feature) space but relatively smaller perturbations in the image space (Table \ref{tab:watermark_l2_dist}). We show that both the diffusion purification attack (in the image space) as well as our model substitution adversarial attack are successful in breaking the StegaStamp watermark, especially using larger diffusion steps or adversarial perturbation budgets.


In addition to the previously mentioned attacks, we introduce {\it a spoofing attack} designed to target the spoofing error in watermarking methods. These attacks have the potential to erroneously categorize explicit or inappropriate content as watermarked, which could have adverse implications for the developers associated with a watermarked generative model, including loss of trust, financial loss, and negative publicity. Our attack functions by instructing watermarking models to watermark a white noise image and then blending this noisy watermarked image with non-watermarked ones to deceive the detector into flagging them as watermarked.

Finally, we extend our theory originally established for watermarking methods, to offer a corresponding theoretical insight for classifier-based AI-image detectors. Our analysis demonstrates a fundamental trade-off between the robustness and reliability of these detectors. As the distributions of real and fake images grow more alike, this trade-off becomes more pronounced. This implies that a detector could only achieve good performance or high robustness, but not both, simultaneously. We further present empirical evidence for this trade-off on some real-world detectors.

\textbf{Summary of Contributions.}  In this paper, we make the following contributions:
\begin{enumerate}
    \item We characterize a fundamental trade-off between evasion and spoofing error rates of image watermarking upon the application of a diffusion purification attack. Empirically, we show that diffusion purification attack can break a whole range of watermarking methods that introduce subtle image perturbations (i.e., low perturbation budget image watermarking).    
\item For high perturbation image watermarking that leaves notable changes on the original images, we show that the diffusion purification attack is not effective. Instead, we develop a model substitution adversarial attack that can successfully remove the watermarks.
    \item We introduce a spoofing attack against watermarking by adding a watermarked noise image to clean images, in order to deceive the detector into flagging them as watermarked.
    
    \item We develop a fundamental trade-off between the robustness and reliability of deepfake detectors and substantiate this concept through experiments.
\end{enumerate}

\section{Prior Work}

\textbf{Image Watermarking.} Image watermarking is a versatile technology with applications in copyright protection, content authenticity, data authentication, privacy preservation, and branding. Its evolution began with manual methods like LSB \citep{wolfgang1996watermark}, and later techniques involved altering spatial or frequency domains \citep{ghazanfari2011lsb++, holub2012designing, pevny2010using, boland1995watermarking, cox1996secure, o1997rotation}. Various transformations such as DCT, DWT, SVD-decomposition \citep{chang2005svd}, and Radon transformations \citep{seo2004robust} were explored. Recent advancements incorporate deep learning and generative models like SteganoGAN \citep{zhang2019steganogan}, StegaStamp \citep{tancik2020stegastamp} RivaGAN \citep{zhang2019robust}, WatermarkDM \citep{zhao2023recipe}, MBRS \citep{jia2021mbrs}, and Tree Ring \citep{wen2023tree}, each employing different methods to embed watermarks into images.

There have been several works trying to attack watermarking methods \citep{jiang2023evading, 9815326}.
Notably a recent concurrent work \citep{zhao2023invisible} also proves that the diffusion purification attack is successful against invisible (low perturbation budget) watermarking. However, \citep{zhao2023invisible} is unable to attack high perturbation budget watermarking methods such as Tree Ring or StegaStamp and argues that they are more reliable watermarking alternatives. In contrast, we show that our model substitution adversarial attack can effectively break those watermarking methods. Additionally, we show that several watermarking approaches are vulnerable to spoofing attacks and characterize a robustness-reliability trade-off for a classification-based deepfake detector.   


\textbf{Classifier-based Detectors.}
Several machine-learning approaches focusing on detecting artifacts in AI-generated content have been studied.
For instance, \citet{Matern2019} target irregularities in face editing algorithms, while \citet{Ciftci2019} exploit biological signals. \citet{LiLCMHWXL20} introduce a technique for identifying partially manipulated videos, and \citet{GuarneraGB20} harness the traces from convolutional layers of generative adversarial networks in fake image detection. 
\citet{BonomiPB21} analyze spatiotemporal texture
dynamics of video signals for Deepfake detection. A plethora of works focus on facial forgery or Deepfake detection using convolution net-based classifiers \citep{cozzolino2017recasting, bayar2016deep, rahmouni2017distinguishing, raja2017transferable, zhou2017two, dogoulis2023improving}. 
\cite{roessler2019faceforensicspp} proposed a face forensics dataset and train ResNet \citep{resnet} and XceptionNet \citep{xception} based classifiers using it.
However, as noted in \citet{HaliassosVPP21}, machine learning-based detectors are often vulnerable to novel input perturbations.
Such limitations challenge the practical utility of these methods, as any real-world detector should achieve good performance while being robust to small perturbations in the input.\looseness=-1

\section{Robustness of Image Watermarking for AI-Image Detection}
\label{section:watermark}

In this section, we first present our theoretical results on fundamental constraints for watermarking methods followed by our practical attacks. Proofs are presented in Appendix~\ref{proof:diffpure_theory}. 




\begin{figure}
    \centering
    \vspace{-0.2cm}
    \begin{minipage}[t]{.48\textwidth}
      \centering
      \includegraphics[width=1.\linewidth]{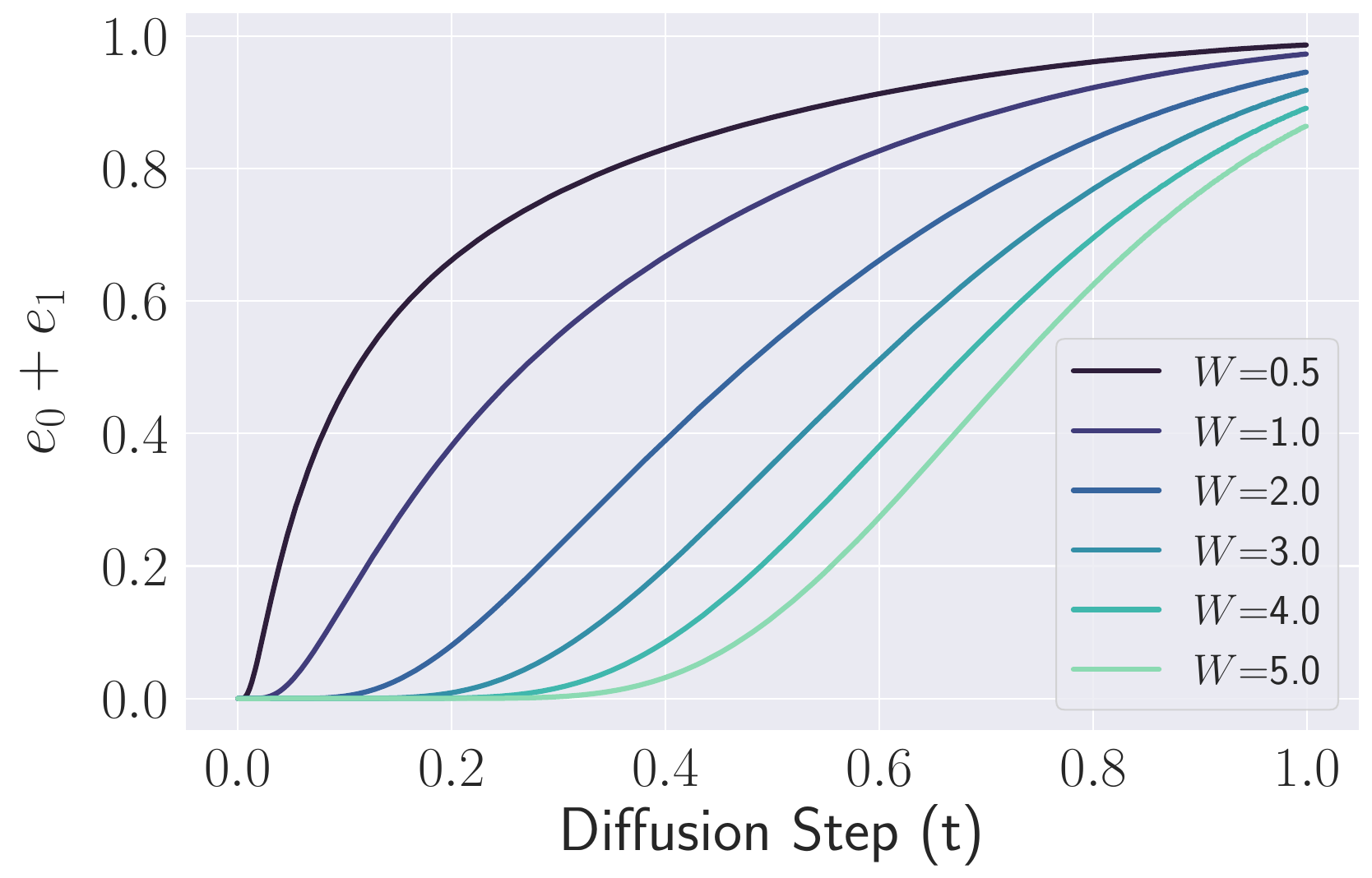}
      \vspace{-0.2cm}
      \captionof{figure}{Lower bound on the sum of evasion and spoofing errors of image watermarks against diffusion purification attack from Theorem~\ref{theorem:diffpure_theory}. The beta schedule for the diffusion model is linear in the range $[0.0008, 0.0120]$. 
      }
      \label{fig:diffpure_theory}
    \end{minipage}%
    \hspace{0.2cm}
    \begin{minipage}[t]{.48\textwidth}
      \centering
      \includegraphics[width=1.\linewidth]{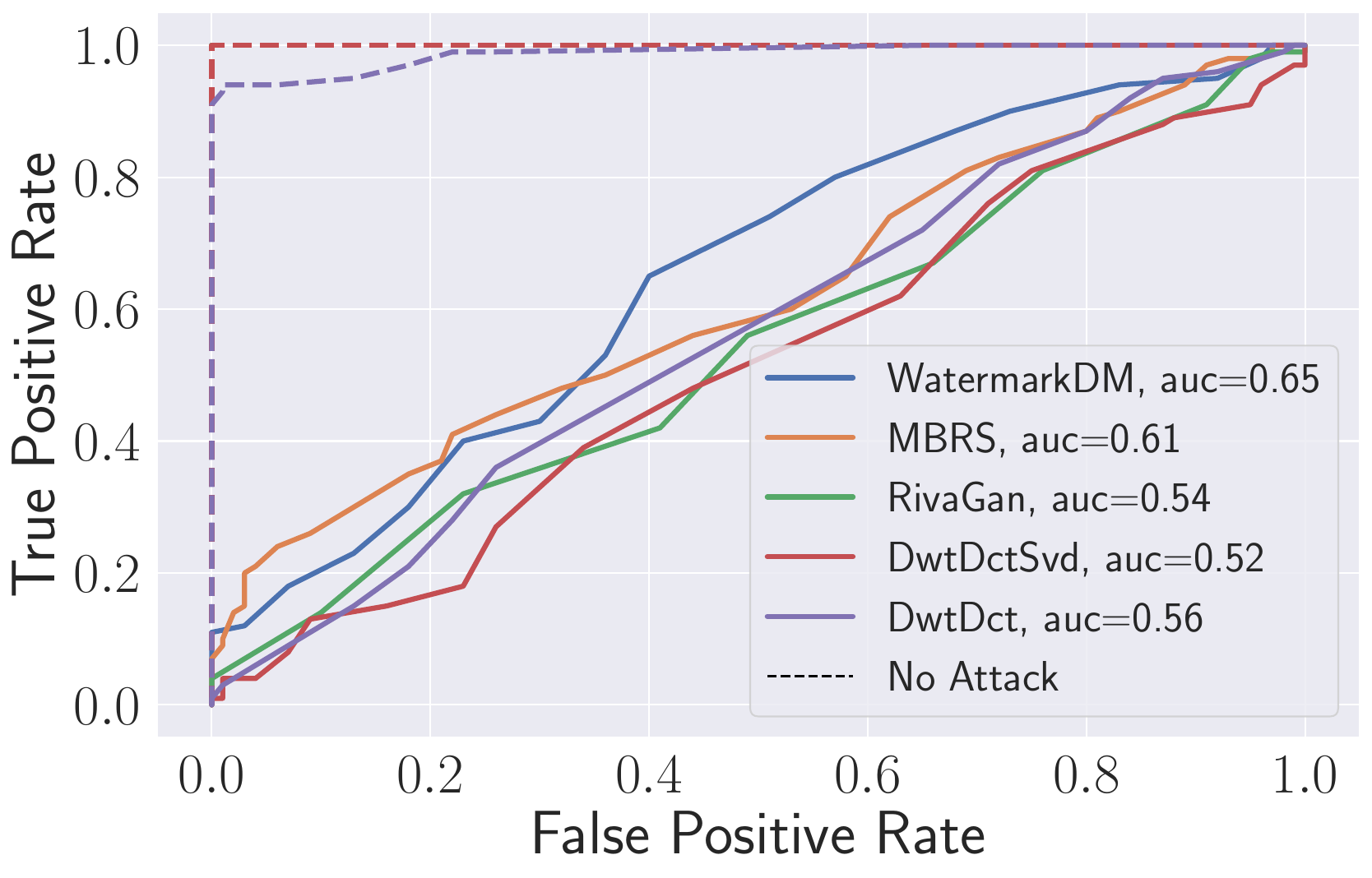}
      \vspace{-0.2cm}
      \captionof{figure}{ROC curves for empirical robustness of image watermark methods against diffusion purification attack with $t=0.2$. The dashed lines show the ROC curves of methods without attacking them.}
      \label{fig:diffpure_roc}
    \end{minipage}
    \vspace{-0.2cm}
\end{figure}

\subsection{Fundamental constraints for Watermarking methods}
\label{subsection:diffpure_theory}


Consider $\F$ to represent the distribution of images that have been watermarked using a particular key string $k$, while $\R$ represents the distribution of non-watermarked images.

\begin{definition}[Evasion and Spoofing Errors]
\label{def:error_definition}
Consider a watermark detector $D$ that predicts values of $0$ and $1$, for non-watermarked and watermarked images, respectively. We define evasion error ($e_0$) and spoofing error ($e_1$) of $D$ on distributions $\R$ and $\F$ as follows:
\begin{align}
    e_0(\F,D) = \mathbb{P}_{x \sim \F}[D(x)=0]\quad \text{and} \quad e_1(\R,D) = \mathbb{P}_{x \sim \R}[D(x)=1]
\end{align}

\end{definition}

We measure distance between the distributions $\R$ and $\F$ using the Wasserstein metric defined as:
\begin{equation}
\label{eq:Wasserstein_def_watermark}
    \w(\R, \F) = \inf_{\gamma \in \Gamma(\R, \F)} \mathbb{E}_{(x_1, x_2) \sim \gamma} \big[ \|x_1 - x_2\| \big],
\end{equation}
where $\Gamma(\R, \F)$ is the set of all joint probability distributions of $\R$ and $\F$.

\begin{definition} (Diffusion Purification)
    \label{definition:diffpure}
    Diffusion purification using a denoising diffusion probabilistic model consists of two steps. In the first step, an image $x_0$ is received and $x_t$ is calculated as:
    $$x_t \sim \mathcal{N}(\sqrt{\bar{\alpha}_t}x_0, (1-\bar{\alpha}_t) I),$$
    where $\bar{\alpha}_t$ is an increasing function of $t$ that spans from $1$ to $0$ as $t$ progresses from $0$ to $1$.
    Afterward, $x_t$ is denoised using a denoising model to output an image $x^{out}_0$. The denoising model is trained to minimize $\|x^{out}_0 - x_0\|$. We represent diffusion purification as $DP_t(.)$ where $x^{out}_0 \sim DP_t(x_0)$.

\end{definition}

This technique was previously used in some other applications. For instance, in a prior study \citep{nie2022diffusion}, it was employed to eliminate adversarial perturbations from images as a defense strategy against adversarial attacks. In the following theorem, we claim that applying diffusion purification on images establishes a lower bound on the sum of evasion and spoofing errors of watermark detectors. \cite{luo2022understanding} presents comprehensive details on denoising diffusion models and their associated parameters, including $\bar{\alpha}_t$.


Let $\R^t$ be the distribution of $x^{out}_0 \sim DP_t(x_0)$ where $x_0 \sim \R$. Similarly, define $\F^t$. Below, we provide a lower bound on the detector's error after performing diffusion purification on $\R$ and $\F$.

\begin{theorem}
\label{theorem:diffpure_theory}
    The sum of evasion and spoofing errors of a watermark detector $D$ on distributions $\R^t$ and $\F^t$ is lower bounded as follows:
    $$e_0(\F^t,D) + e_1(\R^t,D) \geq  1 -\erf(\frac{\sqrt{\bar{\alpha_t}}\ \w(\R, \F)}{2\sqrt{2(1-\bar{\alpha_t})}}),$$
    where $\erf(.)$ is the Gauss error function, and the Wasserstein distance is measured w.r.t the $\ell_2$ norm.
\end{theorem}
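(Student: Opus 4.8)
The plan is to route everything through the total variation distance and reduce the theorem to a single estimate on Gaussian-smoothed distributions. First I would note that the error sum is essentially a TV functional evaluated at the detector's decision region. Writing $A = \{x : D(x) = 1\}$, we have $e_0(\F^t,D) = 1 - \F^t(A)$ and $e_1(\R^t,D) = \R^t(A)$, so $e_0(\F^t,D) + e_1(\R^t,D) = 1 - \big(\F^t(A) - \R^t(A)\big) \ge 1 - \tv(\F^t,\R^t)$, since $\F^t(A) - \R^t(A) \le \sup_B\big(\F^t(B) - \R^t(B)\big) = \tv(\F^t,\R^t)$. Because this holds for \emph{every} detector $D$, it suffices to prove the distribution-level bound $\tv(\F^t,\R^t) \le \erf\!\big(\sqrt{\bar\alpha_t}\,\w(\R,\F)/(2\sqrt{2(1-\bar\alpha_t)})\big)$.

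Next I would strip off the denoiser by a data-processing argument. Let $\tilde\R^t$ and $\tilde\F^t$ denote the laws of the noised intermediate $x_t$ when $x_0 \sim \R$ and $x_0 \sim \F$ respectively. By Definition~\ref{definition:diffpure}, $\R^t$ and $\F^t$ are the pushforwards of $\tilde\R^t$ and $\tilde\F^t$ under one and the same (possibly stochastic) denoising kernel $DP_t$. Applying a common Markov kernel cannot increase total variation, so $\tv(\F^t,\R^t) \le \tv(\tilde\F^t,\tilde\R^t)$. This is the only place the denoiser appears, and it enters solely through the fact that the identical map acts on both streams; no property of the trained network is used.

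It then remains to bound $\tv(\tilde\F^t,\tilde\R^t)$ by the stated $\erf$ term. I would fix a coupling $\gamma$ of $(\R,\F)$ attaining (or approaching) the infimum defining $\w(\R,\F)$. Conditioned on $(x_1,x_2)\sim\gamma$, the noising step produces $\mathcal{N}(\sqrt{\bar\alpha_t}\,x_1,(1-\bar\alpha_t)I)$ and $\mathcal{N}(\sqrt{\bar\alpha_t}\,x_2,(1-\bar\alpha_t)I)$, and marginalizing writes $\tilde\R^t$ and $\tilde\F^t$ as mixtures over the \emph{same} index $(x_1,x_2)$. Joint convexity of TV then gives $\tv(\tilde\F^t,\tilde\R^t) \le \mathbb{E}_\gamma\big[\tv(\mathcal{N}(\sqrt{\bar\alpha_t}x_1,(1-\bar\alpha_t)I),\,\mathcal{N}(\sqrt{\bar\alpha_t}x_2,(1-\bar\alpha_t)I))\big]$. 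For two isotropic Gaussians with equal covariance $\sigma^2 I$ and mean gap $\delta$, rotating to the direction of $\delta$ reduces the computation to one dimension and yields the closed form $\tv = \erf(\|\delta\|/(2\sqrt{2}\,\sigma))$ (using $2\Phi - 1 = \erf(\cdot/\sqrt{2})$ for the standard normal CDF $\Phi$). Here $\|\delta\| = \sqrt{\bar\alpha_t}\,\|x_1-x_2\|$ and $\sigma = \sqrt{1-\bar\alpha_t}$, so the integrand is $\erf\!\big(\sqrt{\bar\alpha_t}\,\|x_1-x_2\|/(2\sqrt{2(1-\bar\alpha_t)})\big)$. Finally, $\erf$ is concave on $[0,\infty)$, so Jensen's inequality moves the expectation inside the $\erf$, and $\mathbb{E}_\gamma\|x_1-x_2\| = \w(\R,\F)$ delivers exactly the claimed bound; chaining the three inequalities finishes the proof.

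I expect the two load-bearing steps to be the data-processing reduction and the concavity/Jensen move, and these are where I would be most careful. The data-processing step is conceptually the crux: it is what lets a bound proved for the noised pair $(\tilde\R^t,\tilde\F^t)$ transfer to the denoised pair $(\R^t,\F^t)$ against an \emph{arbitrary} detector, so I would state precisely that $DP_t$ is a fixed kernel applied to both laws. The concavity of $\erf$ on $[0,\infty)$ is equally essential — for a convex function Jensen would point the wrong way — and I would also make sure the coupling $\gamma$ genuinely realizes both mixtures over a common variable, so that joint convexity of TV is legitimately applicable rather than convexity in a single argument.
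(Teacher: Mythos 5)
Your proof is correct and follows essentially the same route as the paper's: lower-bounding the error sum by $1-\tv(\F^t,\R^t)$ via the decision region, a data-processing step to strip off the common denoising kernel, and a coupling argument combining the closed-form Gaussian total-variation formula with concavity of $\erf$ and Jensen's inequality. The only cosmetic difference is that the paper routes the last step through a generic concave bound $\psi_\sigma$ that specializes to $\erf$, and your explicit tracking of the $\sqrt{\bar{\alpha}_t}$ scaling of the means is, if anything, slightly more careful than the paper's presentation.
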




In Appendix~\ref{appendix:latent_diffpure}, we elaborate on how this theorem can be extended to apply diffusion purification in the latent space rather than the pixel space. Theorem~\ref{theorem:diffpure_theory} implies that when the Wasserstein distance between the watermarked and non-watermarked distributions is low (either in pixel or latent spaces), i.e., watermarking with a low perturbation budget, diffusion purification is effective in compromising the watermark. The lower bound presented in Theorem~\ref{theorem:diffpure_theory}, employing real-world configurations of a practical diffusion model, is illustrated in Figure~\ref{fig:diffpure_theory}, demonstrating the applicability of the theoretical findings in practical scenarios (i.e., the value of error lower bound is considerable, for real-world values of Wasserstein distance).

We note that, even though Theorem~\ref{theorem:diffpure_theory} is stated w.r.t. using diffusion models as the method to denoise images after adding Gaussian noise to them, our theoretical bound can be attained with any arbitrary denoising technique \citep{elad2023image, wang2022zero} (refer to Appendix~\ref{proof:diffpure_theory} for more information). A stronger denoising technique permits the use of a higher magnitude of Gaussian noise, resulting in a more significant lower bound on the error according to Theorem~\ref{theorem:diffpure_theory}.

In the next section, we provide empirical evidence supporting our theoretical result. 


\subsection{Low Perturbation Budget Watermarks: Empirical Attacks}

\begin{figure}
    \centering
    \vspace{-0.2cm}
    \begin{minipage}[t]{.48\textwidth}
      \centering
      \includegraphics[width=1.\linewidth]{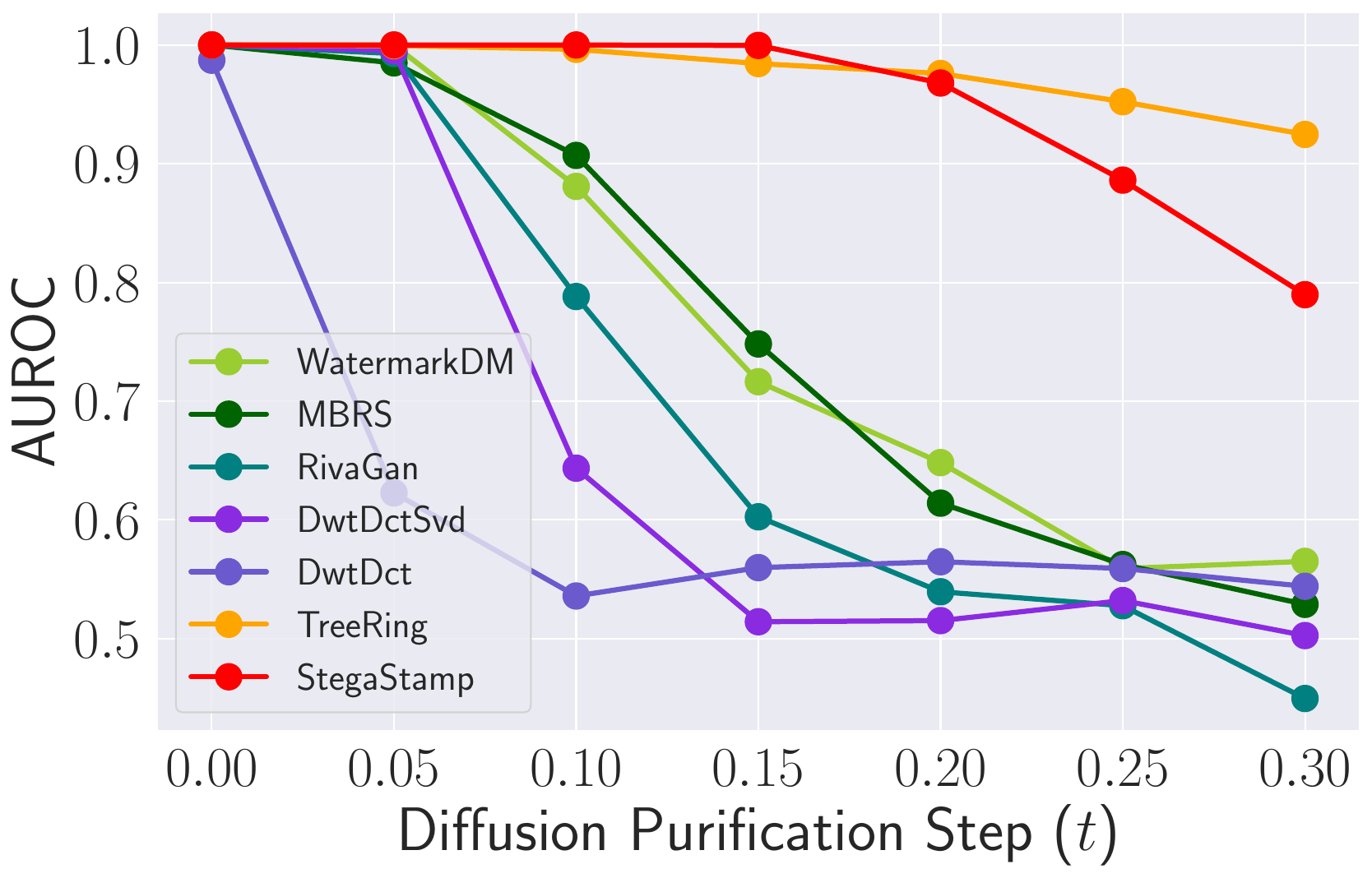}
      \vspace{-0.2cm}
      \captionof{figure}{AUROC of watermarking methods against diffusion purification attack for a range of $t$ values. As expected, the robustness of methods against this attack has a correlation with the average image $\ell_2$ distance from Table~\ref{tab:watermark_l2_dist}.}
      \label{fig:auroc_diffpure_all}
    \end{minipage}%
    \hspace{0.2cm}
    \begin{minipage}[t]{.48\textwidth}
    \centering
    \includegraphics[width=1.\linewidth]{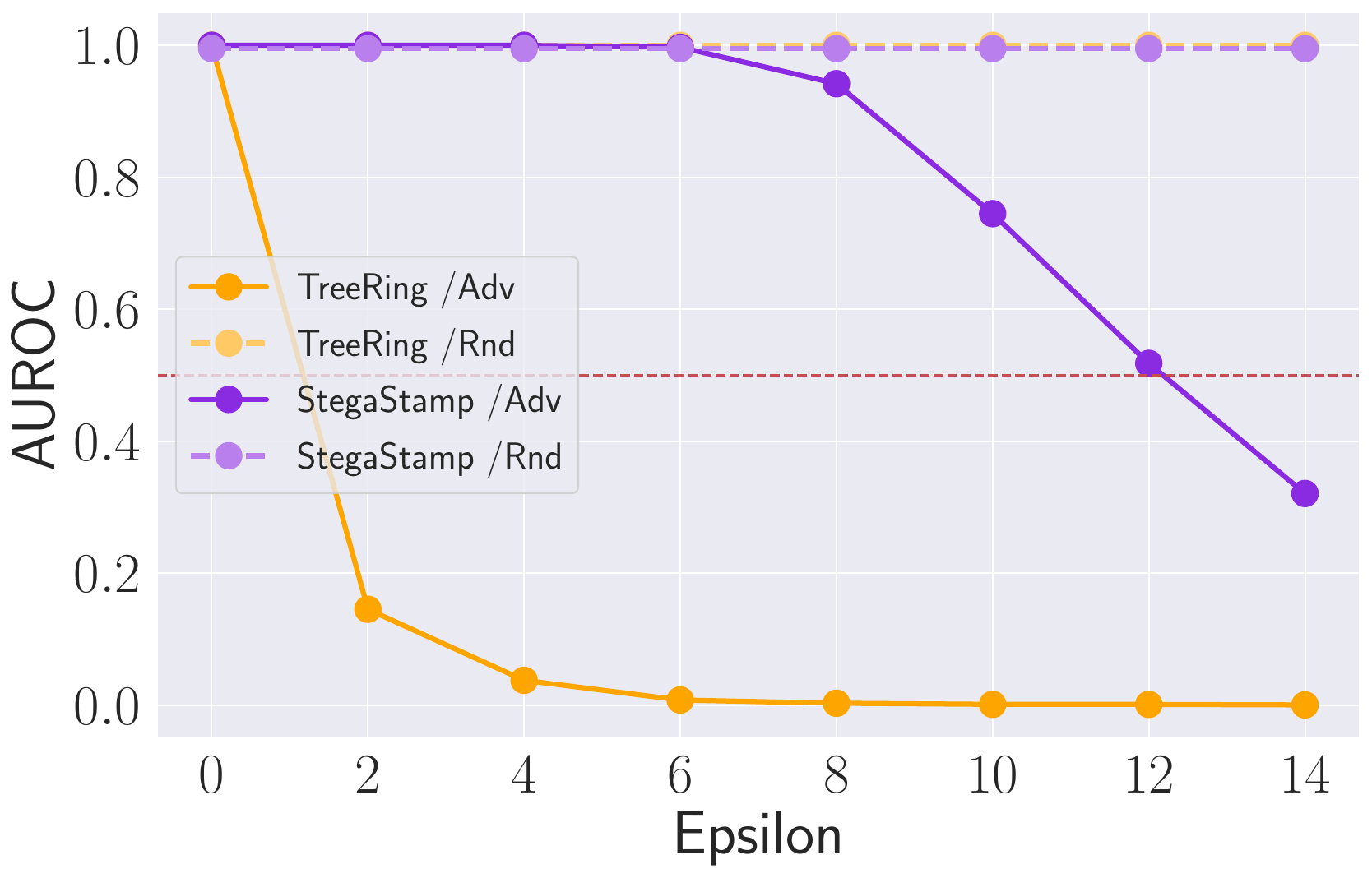}
    \vspace{-0.2cm}
    \captionof{figure}{AUROC of high-perturbation watermarking methods against $\ell_\infty$ adversarial attack w.r.t adversarial perturbation size $\epsilon$. The colored dashed lines measure robustness against uniform random noise in the range $[-2\epsilon, 2\epsilon]$.}
    \label{fig:adv_auroc}
    \end{minipage}
    \vspace{-0.2cm}
\end{figure}

We first categorize certain established watermarking methods into two groups: ``low" and ``high" perturbation budget watermarks. This categorization relies on the image space $\ell_2$ distance between corresponding watermarked and non-watermarked samples for these methods, as detailed in Table~\ref{tab:watermark_l2_dist}. We opt for the $\ell_2$ distance as a surrogate for the actual Wasserstein distance, as it offers an upper bound on the Wasserstein distance, and computing the exact Wasserstein distance is expensive.

In this section, we leverage Theorem~\ref{theorem:diffpure_theory} to attack watermarking techniques with low perturbation budgets (i.e., known as {\it imperceptible or invisible watermarks}
in prior work) utilizing the diffusion purification attack as outlined in Definition~\ref{definition:diffpure}. We will discuss attacks on ``high" perturbation budget watermarks in the next subsection. 

We use 64-bit binary keys for watermarking techniques. Our evaluation is conducted on a set of $100$ images drawn from the ImageNet dataset \citep{russakovsky2015imagenet}, and their watermarked counterparts using each method. For the WatermarkDM method, which necessitates pre-training of its models, we undertake training of the injector and detector models for 20 epochs on the ImageNet dataset. Watermark detectors, when given an input image and an encryption key, produce a confidence score that corresponds to the likelihood of the image being watermarked with that specific key. Subsequently, these images are categorized as watermarked if the confidence score exceeds a predefined threshold, which may either be a constant value or a threshold that varies. In our experiments, we specifically adopt a variable threshold for the process of watermark detection, and use AUROC (Area Under the Receiver Operating Characteristic) measure as our evaluation metric.

The diffusion purification attack, as defined in Definition~\ref{definition:diffpure}, involves a two-step process: adding noise to images and then denoising them using a denoising model. In a diffusion model with $N$ steps, a diffusion purification attack with parameter $t \in [0,1]$ on image $x_0$
creates
a noisy image $x_t \sim \mathcal{N}(\sqrt{\bar{\alpha}_t}x_0, (1-\bar{\alpha}_t) I)$ and denoises it with a trained neural network over $N \times t$ steps. Based on Theorem~\ref{theorem:diffpure_theory}, the diffusion purification attack is expected to lower the performance of watermarking methods, particularly when there is a low Wasserstein distance between the distributions of watermarked and non-watermarked images.

\begin{figure}
    \centering
    \vspace{-0.3cm}
    \includegraphics[width=1\linewidth, trim = 0.0cm 0.0cm 0.0cm 0.0cm, clip,  ]{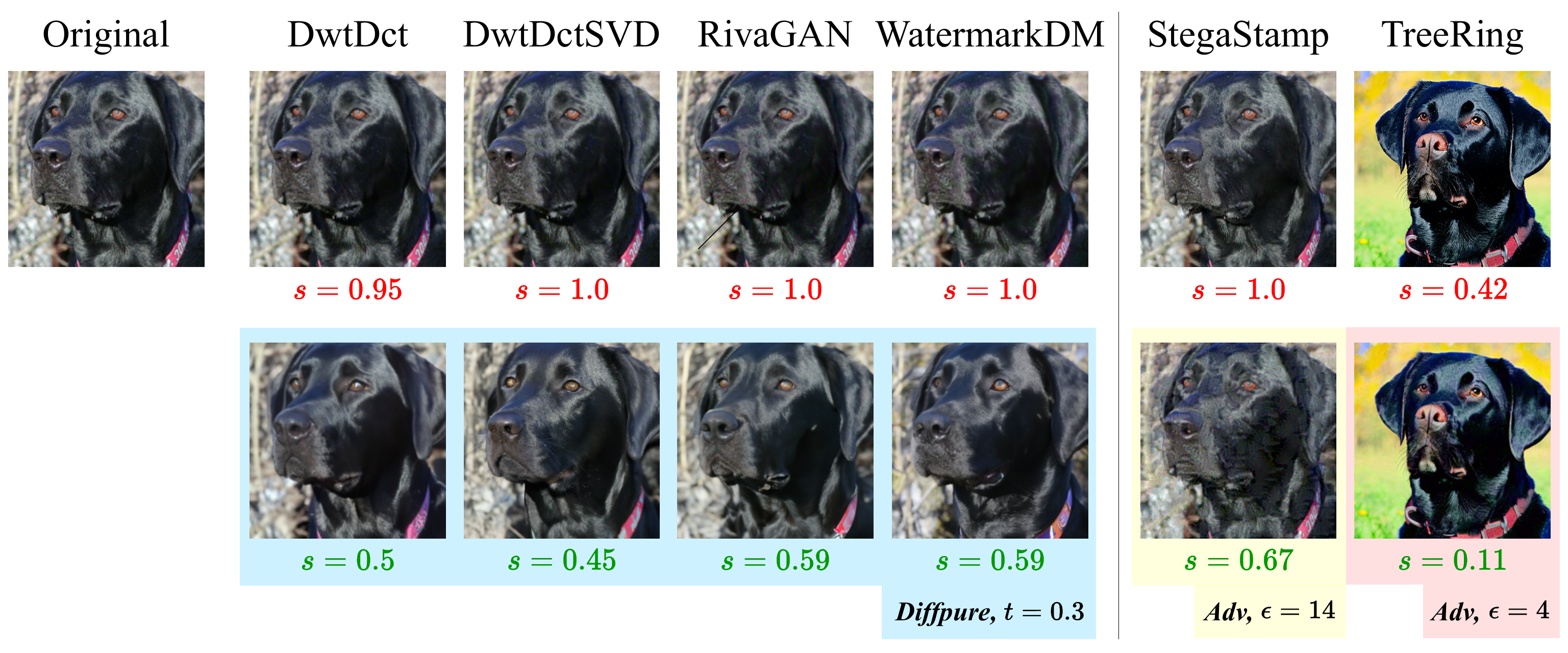}
    \vspace{-0.4cm}
    \caption{Illustrations of images subjected to the image diffusion purification attack and our adversarial model substitution attack. The $s$ value represents the confidence score assigned by the watermark detector to the images where a higher score indicates a greater likelihood of the image being watermarked. These attacks are able to significantly reduce the AUROC of the detectors (details can be found in Figures~\ref{fig:diffpure_roc} and Figure~\ref{fig:adv_auroc}.) }
    \label{fig:watermark_attack_samples}
    \vspace{-0.2cm}
\end{figure}


We make use of the image diffusion models presented in \cite{nie2022diffusion}, particularly a $256\times 256$ unconditional guided diffusion model designed for ImageNet images. As illustrated in Figure~\ref{fig:diffpure_roc}, it becomes evident that all the examined watermarking methods can be compromised through a diffusion purification attack with $t=0.2$. Additionally, we carry out a latent diffusion purification attack, the results of which are detailed in Appendix~\ref{appendix:latent_diffpure}. Choosing a higher value of $t$ results in a better attack success rate, however, it might degrade the quality of the generated images. Some examples of attacked images using different values of $t$ are shown in Figure~\ref{fig:watermark_attack_samples}, and the quality of output images is measured in Table~\ref{tab:diffpure_image_quality} using image quality metric.
The diffusion purification method lowers AUROC by reducing the detector's confidence in watermarked images and does not consistently boost the confidence of non-watermarked ones. This is understandable as the space of watermarked images is typically much smaller than non-watermarked images due to the key string size. Since diffusion purification is a~no-box attack, it cannot apply specific watermark patterns to non-watermarked images without prior knowledge of methods or key strings.


\subsection{High Perturbation Budget Watermarks: Empirical Attacks}
For the watermarking methods that impose high perturbations to the inputs (i.e., TreeRing \citep{wen2023tree} and StegaStamp \citep{tancik2020stegastamp}), our bound in Theorem \ref{theorem:diffpure_theory} becomes vacuous since the Wasserstein distance between watermarked and non-watermarked distributions becomes large. In fact, Figure~\ref{fig:auroc_diffpure_all} shows empirical evidence that as the perturbation budget of watermarking methods increases, the diffusion purification attack becomes less effective, e.g., TreeRing shows strong robustness against that.

\begin{figure}[t]
\vspace{-0.4cm}
  \centering
  \captionsetup[subfigure]{labelformat=empty}
\begin{subfigure}{.54\textwidth}
  \centering
  \includegraphics[width=\linewidth]{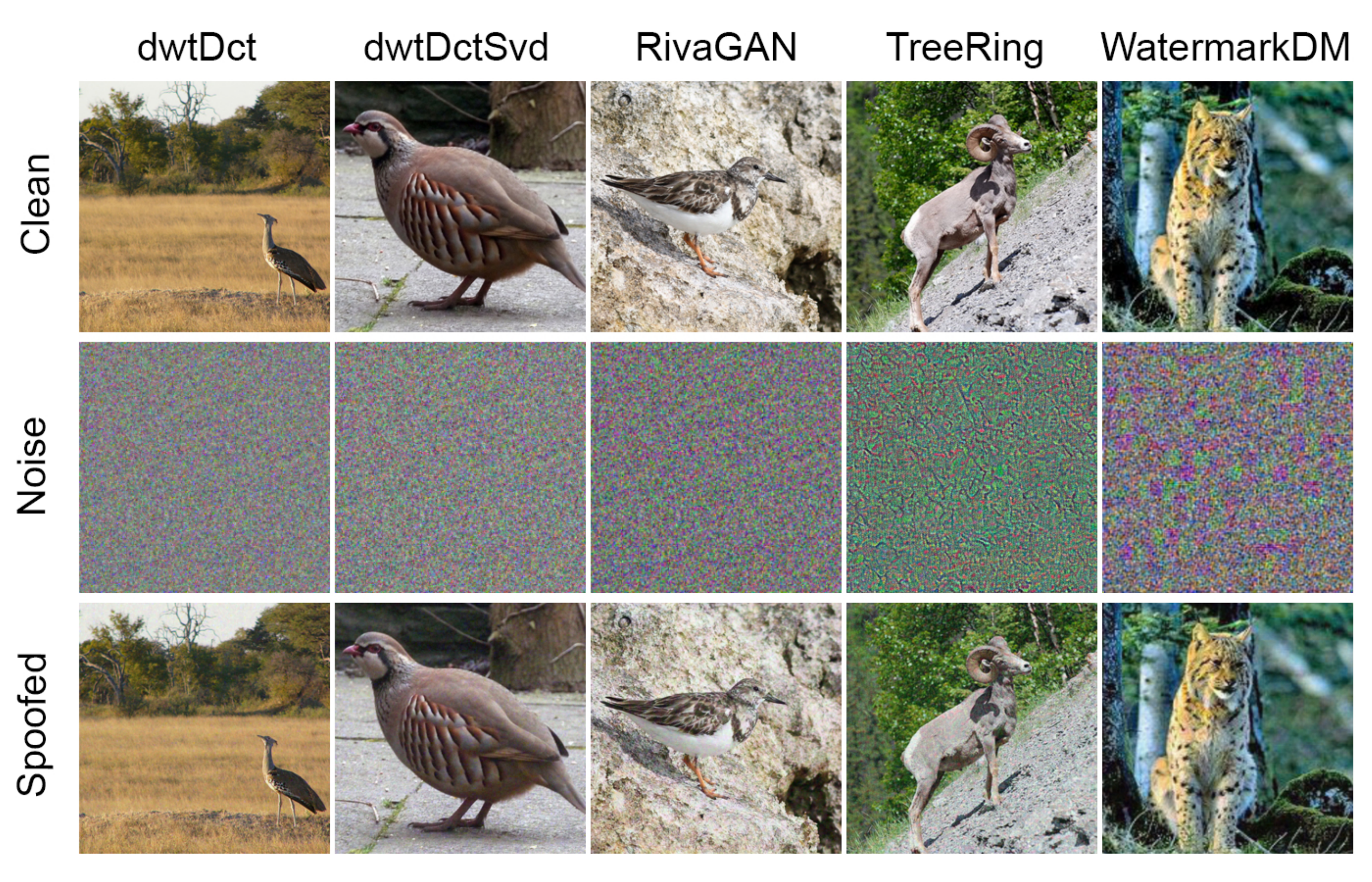}
  \caption{}
\end{subfigure}%
\begin{subfigure}{.44\textwidth}
  \centering
  \includegraphics[width=\linewidth]{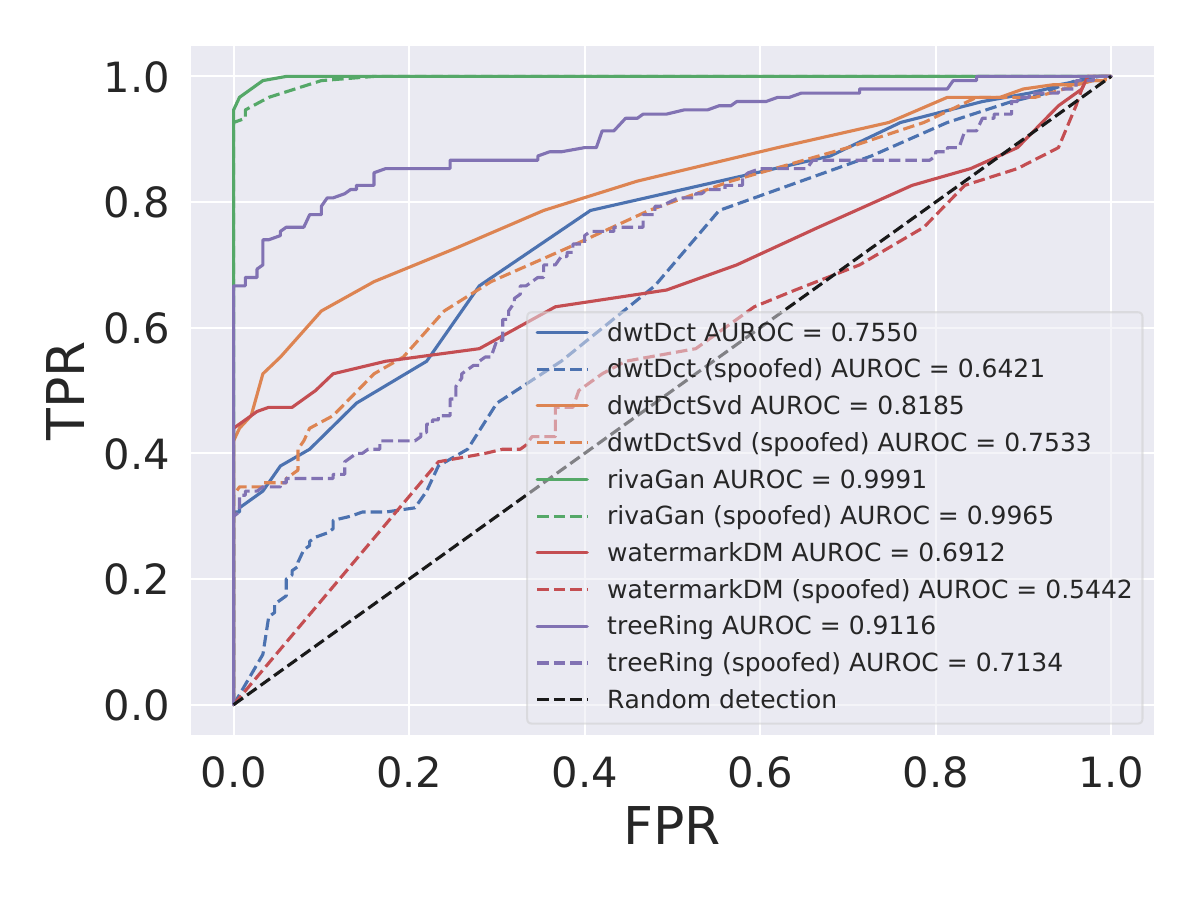}
  \caption{}
\end{subfigure}
\vspace{-0.4cm}
\caption{\textbf{Left:} The figure demonstrates the spoofing of watermarking techniques, comprising clean ImageNet dataset images (top row), noisy watermarked images (middle row), and spoofed watermarked images (bottom row). Spoofed images blend clean and noisy ones, enabling detection as watermarked.
\textbf{Right:} Spoofing attack on various watermarking methods.}
\vspace{-0.2cm}
\label{fig:spoof-images}
\end{figure}

The StegaStamp watermarking \citep{tancik2020stegastamp} imposes large perturbations in the feature space. While its $\ell_2$ perturbation in the image space is larger than that of other invisible watermarking methods (Table \ref{tab:watermark_l2_dist}), it is much smaller than that of the TreeRing. That is partially the reason that the diffusion purification attack in the image space is relatively successful against StegaStamp, especially at higher values of $t$ such as $0.3$ which might leave some artifacts on images. Nevertheless, we categorize StegaStamp as a high perturbation budget watermarking and study additional alternative attacks against it in this section.    


For the high perturbation budget watermarking schemes, we develop a model substitution adversarial attack that can successfully alter the watermark detector's decision. To do this, we first train a ResNet-18 \citep{resnet} classifier on the train split to distinguish between watermarked and non-watermarked images. Then, we target the watermark detector by executing PGD adversarial attacks on test split using the substitute classifier that we have trained. Interestingly, this attack transfers well to the original watermark detector which we assume we do {\it not} have a white box access to.

Figure~\ref{fig:adv_auroc} displays the AUROC of the methods following adversarial attacks using various adversarial perturbation budgets denoted as $\epsilon$. StegaStamp demonstrates greater resilience to our attack, requiring an $\epsilon$ value of $12/255$ before its performance degrades to the level of a random detector. We note that this level of adversarial noise may leave perceptible artifacts on the images. However, TreeRing is found to be more vulnerable, as a perturbation budget as low as $\epsilon=2/255$ can render it completely ineffective. Note that the transferability of the adversarial attacks is reliant on the substitute classifier's architecture and training procedure. In our case, we employ a basic ResNet-18 model with standard training procedures. Opting for a~more suitable model configuration may lead to a~substantial increase in the attack's success rate on the watermark detector. More details about the adversarial attack can be found in Appendix~\ref{appendix:adv_attack}.

\subsection{Spoofing attacks on watermarking methods}


An effective watermarking method should minimize both spoofing and evasion errors. High spoofing errors enable adversaries to manipulate natural images, leading to a ``spoofing attack''. Such attacks can falsely identify obscene images as watermarked, potentially harming the reputation of the developers of a watermarked generative model. In this section, we evaluate various watermarking techniques in the presence of adversarial spoofing attempts.

We use a simple strategy to spoof various watermarking techniques by blending watermarked noisy images with clean images (see Algorithm~\ref{alg:spoof}).
A detailed explanation of the attack is available in Appendix~\ref{appendix:spoofing}.
Figure \ref{fig:spoof-images} shows examples of spoofed images for various watermarking methods. While evaluating the AUROC metric, we also augment the images in our dataset using two different techniques: random cropping to 200$\times$200-dimensional images and resizing back to 256$\times$256-dimensional images, and random rotations between -30 and 30 degrees. Figure \ref{fig:spoof-images} shows ROC curves for our spoofing attack. As seen here, the AUROC and TPR at low FPR metrics of all the watermarking methods considered here drop after our spoofing attack. RivaGAN seems to be the most robust to our spoofing attack. However, at low FPR regimes, some of the RivaGAN images can be spoofed as well.

\section{Robustness-Reliability Trade-off of Deepfake Detectors}
\label{section:tradeoff}
A reliable deepfake detector should exhibit the following two properties: (i) {\it Robustness:} Minor input image perturbations should not influence performance. (ii) {\it Reliability:} The detector should accurately identify fake images while minimizing false positives.
In this section, we extend the techniques used in proving Theorem \ref{theorem:diffpure_theory} to  
show a fundamental trade-off between these two properties.




Let $\R$ and $\F$ denote the distributions of real and fake images.
Consider a detector $D$ that maps an input image $x \in \mathbb{R}^d$ to a latent representation $\phi(x) \in \mathbb{R}^l$ that encodes the perceptual features of the image and uses this representation for detection.
We define the robustness of $D$ as its ability to correctly classify a noisy version of the image in this latent space.
Let $\N(\phi(x), \sigma)$
denote the distribution of noisy versions of the image $x$ in the latent space, where $\sigma$ is a parameter representing the size of the noise distribution.
Here, $\N$ represents a general noise distribution with size parameter $\sigma$.
For example, $\N(\phi(x), \sigma)$ could represent an isometric Gaussian distribution with variance $\sigma^2$ or a uniform distribution with width $\sigma$ centered at $\phi(x)$.

\begin{definition}[Robust Detector]
\label{def:robust_detector}
We say a detector $D$ is $(\sigma, \alpha)$-robust on $\R$ and $\F$ under noise distribution $\N$ if, for an image $x$ drawn from either $\R$ or $\F$, its prediction is consistent on latent representations from $\N(\phi(x), \sigma)$ with probability at least $(1-\alpha)$, for some $\alpha \geq 0$, i.e., 
\begin{equation}
\label{eq:robust_detector}
 \forall k \in \{0, 1\}, \forall \PD \in \{\R, \F\},\quad  \mathbb{P}_{x \sim \PD, \tilde{\phi} \sim \N(\phi(x), \sigma)}\left[ D(\tilde{\phi}) = k | D(\phi(x)) = k \right] \geq 1-\alpha.
\end{equation}
\end{definition}
This indicates a robust detector's prediction should remain largely unchanged for noisy inputs.



To measure the distance between two distributions $\R$ and $\F$ we use the Wasserstein metric, following a similar formulation as Equation~\ref{eq:Wasserstein_def_watermark}. However, here, we define the distance with respect to a norm $\|\cdot\|$ in the latent space $\mathbb{R}^l$ as follows:
\begin{equation}
\label{eq:Wasserstein_def_main}
    \w(\R, \F) = \inf_{\gamma \in \Gamma(\R, \F)} \mathbb{E}_{(x_1, x_2) \sim \gamma} \big[ \|\phi(x_1) - \phi(x_2)\| \big].
\end{equation}
Consider two images $x_1$ and $x_2$. Let $\psi_\sigma(\cdot)$ denote a concave upper bound on the total variation between the corresponding noise distributions $\N(\phi(x_1), \sigma)$ and $\N(\phi(x_2), \sigma)$ as a function of the distance $\|\phi(x_1) - \phi(x_2)\|$ between the corresponding images in the latent space, i.e.,
\begin{equation}
\label{eq:tv_bnd_main}
    \tv \big( \N(\phi(x_1), \sigma), \N(\phi(x_2), \sigma) \big) \leq \psi_\sigma \big( \|\phi(x_1) - \phi(x_2)\| \big).
\end{equation}
Note that a concave upper bound like this always exists for any noise distribution $\N$.
This is because the total variation between the noise distributions for two images goes from zero to one as the distance between them in the latent space increases.
Thus, a trivial bound could be obtained by simply considering the convex hull of the region under the curve of the total variation with respect to the distance.
In the case where $\N$ is an isometric Gaussian and the distance is measured using the $\ell_2$ norm, this bound takes the form of the Gauss error function, more precisely:
\[\psi_\sigma \big( \|\phi(x_1) - \phi(x_2)\|_2 \big) = \erf \left( \frac{\|\phi(x_1) - \phi(x_2)\|_2}{2\sqrt{2}\sigma} \right).\]

\begin{theorem}
\label{thm:rob_vs_rel}
    A $(\sigma, \alpha)$-robust detector's  $\auc$ is upper bounded as follows:
    \[\auc(D) \leq \frac{1}{1-\alpha} 
 \left(\psi_\sigma(\w_\phi(\R, \F)) - \frac{\psi_\sigma(\w_\phi(\R, \F))^2}{2}\right) + \frac{1+2\alpha - 2\alpha^2}{2(1-\alpha)}.\]
\end{theorem}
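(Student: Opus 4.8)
The plan is to reduce the bound to two self-contained estimates — an upper bound on $\auc$ in terms of the total variation between the two ``noised'' score distributions, and an upper bound on that total variation in terms of $\psi_\sigma(\w_\phi(\R,\F))$ — and then to pay for imperfect robustness (the case $\alpha > 0$) with an explicit slack term. Write $\tilde{\R}$ and $\tilde{\F}$ for the laws of the noisy latent $\tilde{\phi} \sim \N(\phi(x),\sigma)$ when $x \sim \R$ and $x \sim \F$ respectively, and recall the rank (Mann--Whitney) reading of the detector's AUROC as the probability that a fake sample outscores a real one, with ties counted as one half.

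First I would establish the combinatorial lemma that if the positive and negative score distributions have total variation at most $\delta$, then $\auc \leq 1 - \tfrac{1}{2}(1-\delta)^2 = \tfrac{1}{2} + \delta - \tfrac{\delta^2}{2}$. The proof writes each distribution as a mixture of a shared component of mass $1-\delta$ (common to both) and a disjoint residual of mass $\delta$, which is exactly the total-variation decomposition. Drawing an independent fake and real sample, both land in the shared component with probability $(1-\delta)^2$, on which event the two scores are identically distributed and contribute exactly $\tfrac{1}{2}$ to the rank statistic; on the complementary event the contribution is at most $1$. Summing gives the claimed bound, and since $1 - \tfrac{1}{2}(1-\delta)^2$ is increasing in $\delta$, I may replace $\delta$ by any upper bound on it.

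Next I would bound $\tv(\tilde{\R}, \tilde{\F})$ by $\psi_\sigma(\w_\phi(\R,\F))$. Let $\gamma$ be an optimal coupling for the latent Wasserstein distance; then $\tilde{\R}$ and $\tilde{\F}$ are the $\gamma$-averages of $\N(\phi(x_1),\sigma)$ and $\N(\phi(x_2),\sigma)$, so joint convexity of total variation gives $\tv(\tilde{\R},\tilde{\F}) \le \mathbb{E}_\gamma[\tv(\N(\phi(x_1),\sigma),\N(\phi(x_2),\sigma))]$. Applying the per-pair total-variation bound that defines $\psi_\sigma$, and then Jensen's inequality using the concavity of $\psi_\sigma$, yields $\tv(\tilde{\R},\tilde{\F}) \le \psi_\sigma(\mathbb{E}_\gamma\|\phi(x_1)-\phi(x_2)\|) = \psi_\sigma(\w_\phi(\R,\F))$. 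Combining this with the first lemma already gives the theorem in the perfectly robust case $\alpha = 0$, where the right-hand side collapses to $\tfrac{1}{2} + \psi_\sigma(\w_\phi(\R,\F)) - \tfrac{1}{2}\psi_\sigma(\w_\phi(\R,\F))^2$.

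Finally I would handle $\alpha > 0$ by transferring the estimate from the noised detector to the clean one. The $(\sigma,\alpha)$-robustness condition says that, conditioned on each clean decision, the noised decision agrees with probability at least $1-\alpha$ on each of $\R$ and $\F$; this lets me express the clean true-positive and false-positive behavior in terms of the noised behavior up to an $O(\alpha)$ correction, and hence bound the clean AUROC by the noised AUROC rescaled by $\tfrac{1}{1-\alpha}$ plus an additive correction, producing the stated coefficients $\tfrac{1}{1-\alpha}\big(\psi_\sigma(\w_\phi(\R,\F)) - \tfrac{1}{2}\psi_\sigma(\w_\phi(\R,\F))^2\big) + \tfrac{1+2\alpha-2\alpha^2}{2(1-\alpha)}$. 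I expect this last transfer to be the main obstacle: AUROC is a pairwise rank statistic whereas robustness is a per-sample conditional agreement, so I must convert the conditional probabilities into a bound on the rank statistic and verify that the slack accumulates to exactly the claimed $\alpha$-dependent term rather than a looser one. Careful tie-handling in the rank statistic and the validity of the mixture decomposition for arbitrary score distributions are the remaining technical points to check.
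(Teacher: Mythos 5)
Your overall architecture matches the paper's: an upper bound on the noised detector's $\auc$ in terms of the total variation between the noised latent distributions, the coupling-plus-Jensen bound $\tv(\tilde{\R},\tilde{\F}) \leq \psi_\sigma(\w_\phi(\R,\F))$ (your second lemma is the paper's Lemma~\ref{lem:tv_wassertein_bnd}, essentially verbatim), and a transfer from the noised to the clean $\auc$ costing a factor $\tfrac{1}{1-\alpha}$ plus an additive term. Where you genuinely differ is the first step: the paper derives $\auc_\N \leq \tfrac12 + \delta - \tfrac{\delta^2}{2}$ by integrating the ROC curve under the pointwise constraint $\tpr_\N \leq \min(\fpr_\N + \delta,\, 1)$, whereas you obtain the identical bound from the Mann--Whitney reading of $\auc$ together with the total-variation mixture decomposition: with probability $(1-\delta)^2$ both samples fall in the shared component, where exchangeability forces the rank statistic to contribute exactly $\tfrac12$, and the remainder contributes at most $1$. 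Your argument is correct and arguably more transparent about why the bound is quadratic in $\delta$; the paper's version buys a formulation that composes directly with the per-threshold robustness bounds used later.

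The one place your proposal is not yet a proof is the $\alpha$-transfer, which you yourself flag as the main obstacle. The tension you identify --- $\auc$ is a pairwise rank statistic while robustness is a per-sample conditional agreement --- is real in the rank-statistic framework, but it dissolves if you switch back to the ROC-integral view for this step, which is what the paper does. Treating $D$ as a thresholded score so that Equation~\ref{eq:robust_detector} holds at each operating point, the law of total probability gives $\tpr_\N \geq (1-\alpha)\,\tpr$ and $\fpr_\N \leq \fpr + \alpha$ pointwise along the curve, and hence
\[
\auc_\N(D) = \int_0^1 \tpr_\N \, d\fpr_\N \;\geq\; (1-\alpha)\int_0^1 \tpr\, d\fpr_\N \;\geq\; (1-\alpha)\int_0^{1-\alpha} \tpr\, d\fpr \;\geq\; (1-\alpha)\big(\auc(D) - \alpha\big),
\]
i.e., $\auc(D) \leq \auc_\N(D)/(1-\alpha) + \alpha$. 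Substituting your bound $\auc_\N \leq \tfrac12 + \psi_\sigma(\w_\phi(\R,\F)) - \tfrac12 \psi_\sigma(\w_\phi(\R,\F))^2$ then yields exactly the stated constants, since $\tfrac{1}{2(1-\alpha)} + \alpha = \tfrac{1+2\alpha-2\alpha^2}{2(1-\alpha)}$. With that step made explicit, your proof is complete.
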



\begin{figure}
    \centering
    \vspace{-0.4cm}
     \begin{minipage}[t]{.44\textwidth}
      \centering
      \includegraphics[width=1.\linewidth, trim = 0.45cm 0.1cm 0.45cm 0.1cm, clip, ]{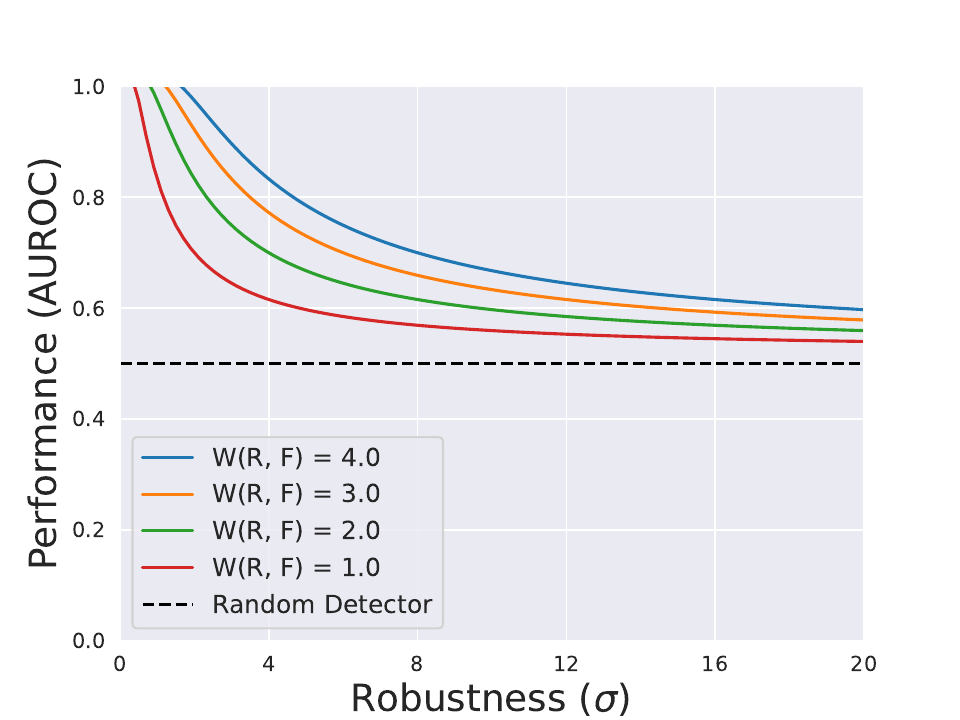}
      \vspace{-0.2cm}
      \captionof{figure}{Detection performance w.r.t robustness parameter $\sigma$ for different values of the Wasserstein distance between real $\R$ and fake $\F$ distributions. 
      }
      \label{fig:auc_vs_sig}
    \end{minipage}
    \label{fig:diffpure_theory_emperical}
    \hspace{0.2cm}
    \begin{minipage}[t]{.44\textwidth}
      \centering
     \includegraphics[width=\textwidth, trim = 0.5cm 0.7cm 0.45cm 0.10cm, clip, ]{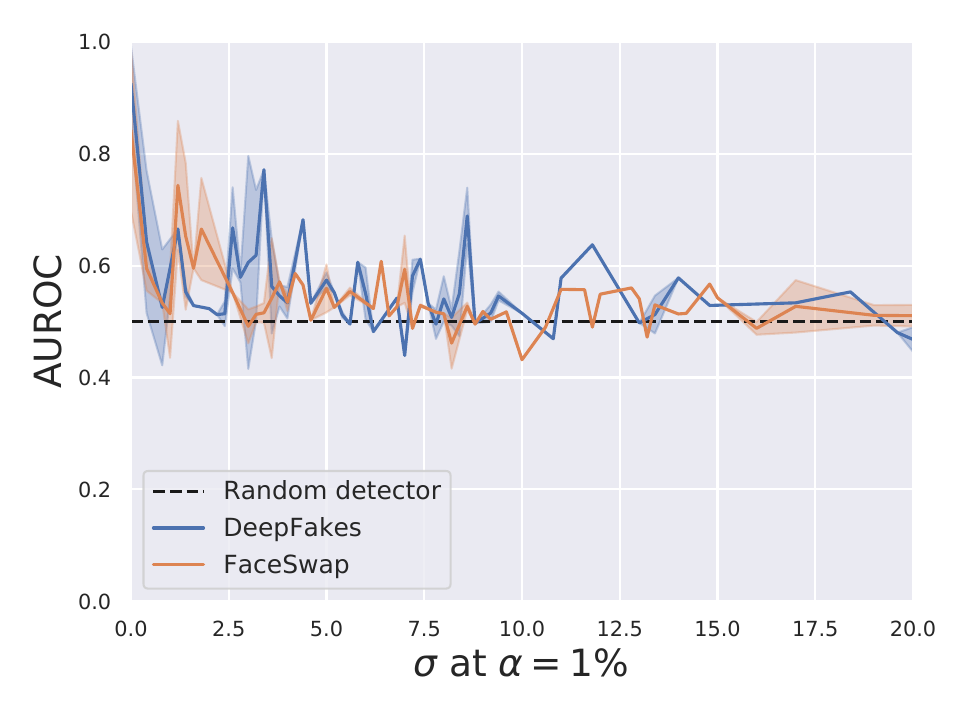}
      \vspace{-0.2cm}
      \captionof{figure}{AUROC vs. $\sigma$ for a robust deepfake detector
      with ResNet-18 backbone on DeepFakes \citep{deepfakes} and FaceSwap \citep{faceswap} datasets.
      }
        \label{fig:tradeoff-auroc-sigma-exp}
      
    \end{minipage}%
    \vspace{-0.4cm}
\end{figure}

For example, when the Wasserstein distance is measured using $\ell_2$ in the latent space and the noise is isometric Gaussian with variance $\sigma^2$, $\psi_\sigma$ takes the form of the Gauss error function: $\psi_\sigma (z) = \erf (z/(2\sqrt{2}\sigma) .$
We set $\alpha$ to some small positive value (i.e., $\alpha = 1\%$) and analyze the behavior of the bound for different values of $\sigma$. Figure~\ref{fig:auc_vs_sig} shows the behavior of the bound with respect to the robustness parameter $\sigma$ for different values of the Wasserstein distance while
Figure~\ref{fig:bnd_plot} shows the behavior of the bound with respect to the Wasserstein distance for different values of $\sigma$.
The detection performance bound has a negative relationship with the amount of noise that can be tolerated.

\noindent{\bf Experiments.} We perform experiments on the images from the FaceForensics++ dataset hosted by \citet{roessler2019faceforensicspp} to verify our theoretical insights empirically. We use ImageNet pretrained ResNet-18 \citep{resnet} (based on popular DeepFake detectors \citep{roessler2019faceforensicspp, dessa}) and VGG-16-BN \citep{simonyan2014very}.
More details on dataset preprocessing and experiments are provided in Appendix \ref{app:deepfake-experiments}. We train the models to classify between real and synthetic facial images. The initial trained layers of the models are fixed to be the latent representation $\phi$ given in Equation \ref{eq:robust_detector}. The remaining layers of the models represent detector $D$. For both ResNet-18 and VGG-16-BN, we choose every layer except the last two convolution layer blocks to represent $\phi$. Detectors with varying robustness to random noise are trained using noisy latent space feature vectors output from $\phi$. We train different detectors with the standard deviation of noise $\sigma$ varied from 0 to 20.
For different detectors, we compute the inference $\sigma$ on the test dataset at which they achieve an $\alpha$ of 0.01 using Equation \ref{eq:robust_detector}. In Appendix \ref{app:deepfake-experiments} (Figure \ref{fig:inf-sigma}), we show that the detector's robustness (inference $\sigma$ at $\alpha=1\%$) to random noise increases as the training sigma increases. We use ten randomly sampled Gaussian noises for each sample $\phi(x)$ for this evaluation. After five independent trials, we plot AUROC vs. $\sigma$ for various $(\sigma, \alpha=0.01)$-robust detectors in Figure \ref{fig:tradeoff-auroc-sigma-exp} using 
a ResNet-18 backbone for the detector (see plot using VGG-16-BN in Figure~\ref{fig:tradeoff-auroc-sigma-exp-vgg}). Our empirical results show that as the robustness or $\sigma$ at fixed $\alpha$ increases, the AUROC or the performance of the detectors drops.\looseness=-1 

\section{Conclusion}
In this work, we studied the robustness of AI-image detection methods. We proposed diffusion purification as a certified attack against low-perturbation watermarks, and a model substitution adversarial attack against high-perturbation watermarks. Furthermore, we showed a fundamental reliability vs. robustness trade-off for classifier-based deepfake detectors. Based on our results, designing a robust watermark is a challenging, but not necessarily impossible task. An effective method should possess specific attributes, including a substantial enough watermark perturbation, resistance to naive classification, and resilience to noise transferred from other watermarked images.

\section{Ethics Statement}

In our research, we follow academic integrity and responsible AI practices. We aim to contribute to discussions on the security of detecting AI-generated content. We prioritize ethical considerations and focus on the societal impact of our findings. Our commitment is to transparency and awareness in the evolving field of generative AI technologies, with the goal of preventing misuse while encouraging progress.


\bibliography{iclr2024_conference}
\bibliographystyle{iclr2024_conference}

\appendix

\newpage

\section{Complementary Results for Watermarking Methods}

\begin{table}
\vspace{-0.5cm}
\begin{center}
\begin{tabular}{ccc}\\\toprule  
Method & \begin{tabular}{@{}c@{}}Image \\ $\ell_2$ distance\end{tabular} & \begin{tabular}{@{}c@{}}Latent \\ $\ell_2$ distance\end{tabular} \\\midrule
RivaGAN \citep{Zhang_watermark_2019} & 4.19 & 8.47\\  
DwtDct \citep{cox2007digital} &5.59 & 5.47\\  
DwtDctSvd \citep{cox2007digital} & 5.54 & 6.67\\ 
WatermarkDM \citep{zhao2023recipe} & 7.26 & 13.84\\  \midrule
StegaStamp \citep{tancik2020stegastamp}  & 17.40 & 118.17 \\  
TreeRing \citep{wen2023tree} & 117.58 & 52.81\\  \bottomrule
\end{tabular}
\caption{Average $\ell_2$ distance between corresponding watermarked and non-watermarked images for each method. The latent representations were obtained using a VQGAN model \citep{esser2021taming}, commonly used for latent diffusion models. We consider the first four methods as low perturbation, and the last two as high perturbation ones.}
\label{tab:watermark_l2_dist}
\end{center}
\end{table}

\subsection{diffusion purification attack}
\label{appendix:diffpure}

\begin{figure}[t]
  \centering
  \captionsetup[subfigure]{labelformat=empty}
\begin{subfigure}{.48\textwidth}
  \centering
  \includegraphics[width=\linewidth]{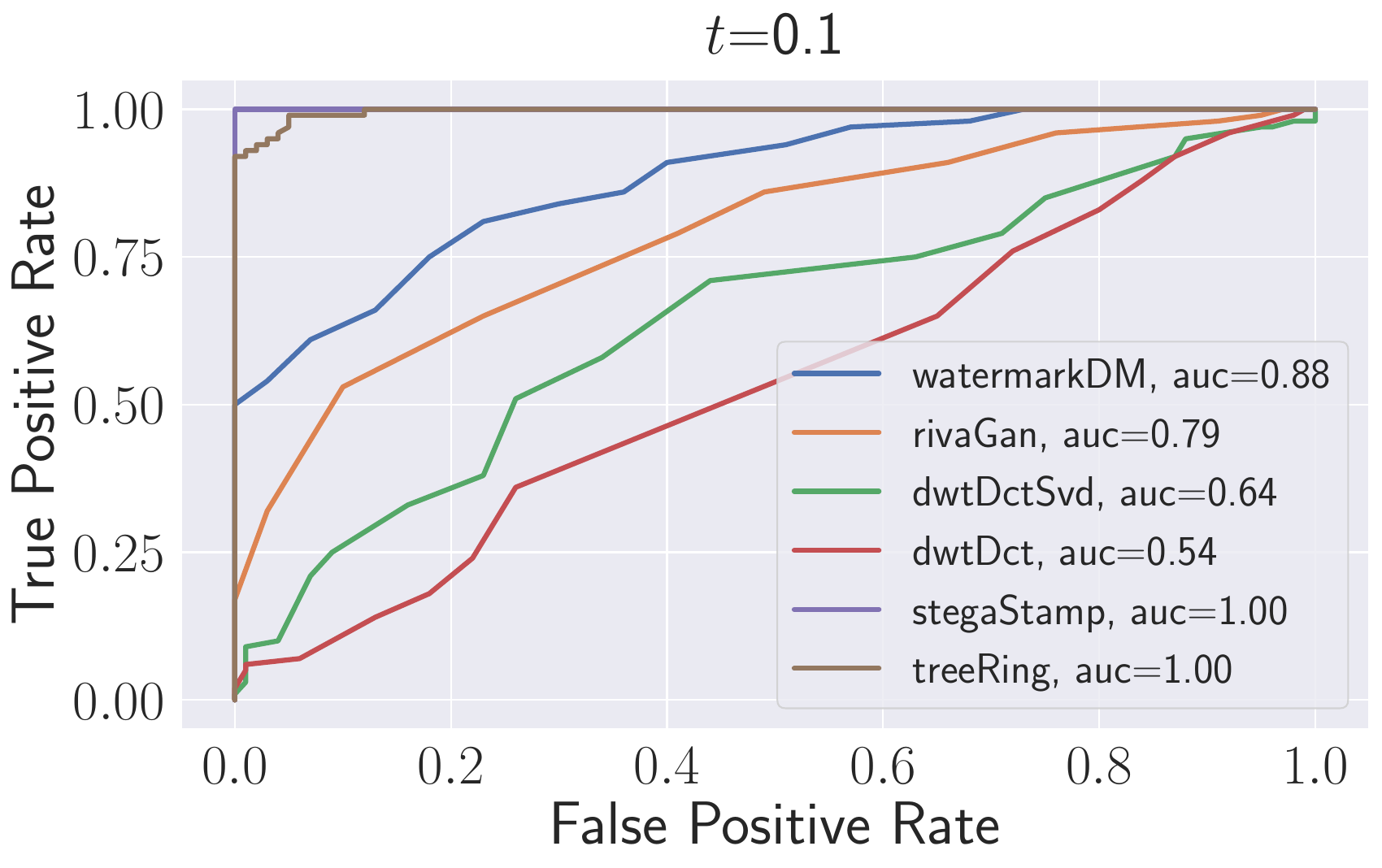}
  \caption{}
\end{subfigure}%
\hspace{0.2cm}
\begin{subfigure}{.48\textwidth}
  \centering
  \includegraphics[width=\linewidth]{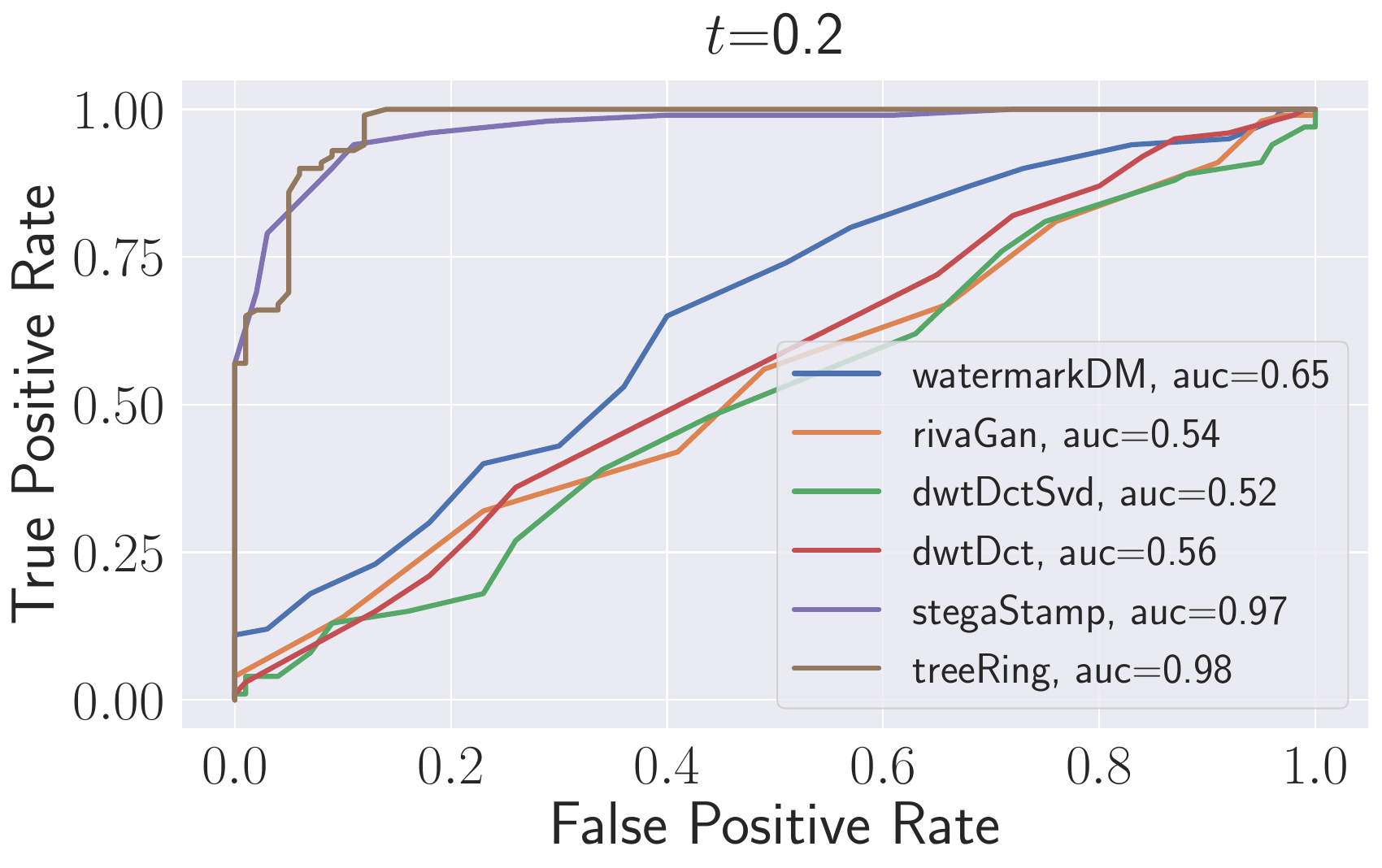}
  \caption{}
\end{subfigure}
\centering
\begin{subfigure}{.48\textwidth}
  \centering
  \includegraphics[width=\linewidth]{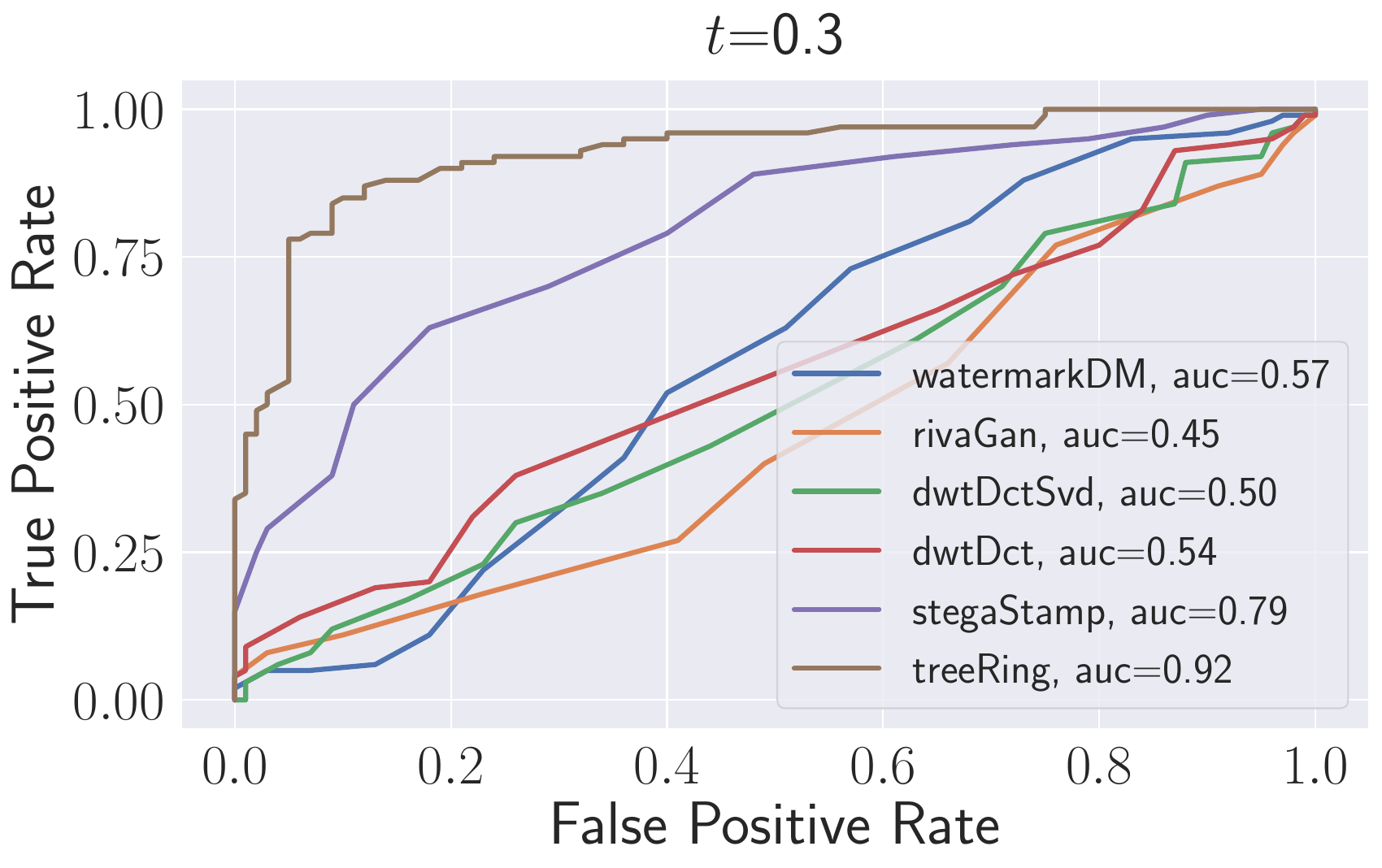}
  \caption{}
\end{subfigure}
\vspace{-0.4cm}
\caption{ROC curves for watermarking methods against diffusion purification attack with different values of $t$.}
\vspace{-0.2cm}
\label{fig:diffpure_roc_all}
\end{figure}

\begin{figure}
    \centering
    \includegraphics[width=1\linewidth, trim = 0.0cm 0.0cm 0.0cm 0.0cm, clip,  ]{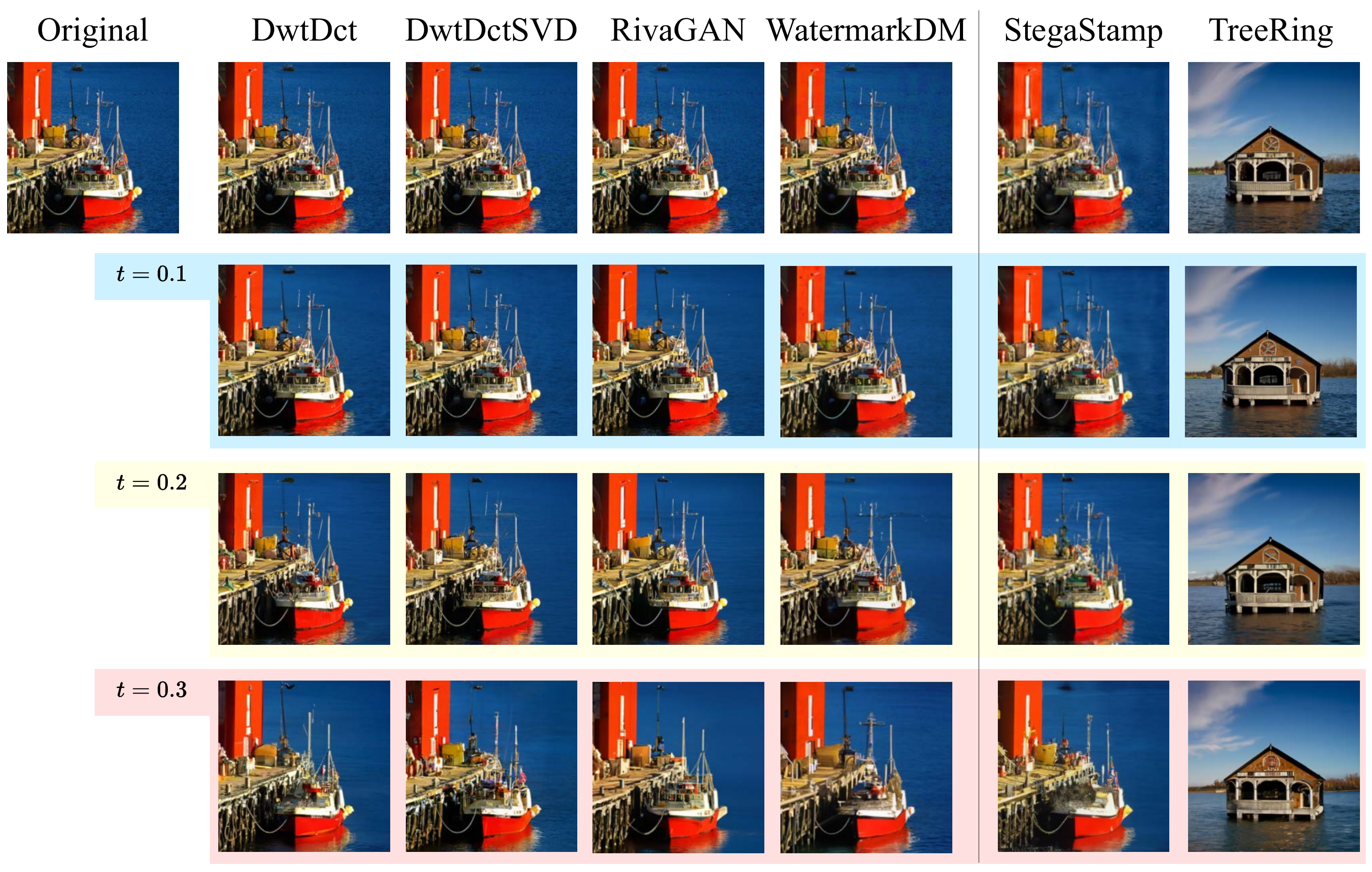}
    \includegraphics[width=1\linewidth, trim = 0.0cm 0.0cm 0.0cm 0.0cm, clip,  ]{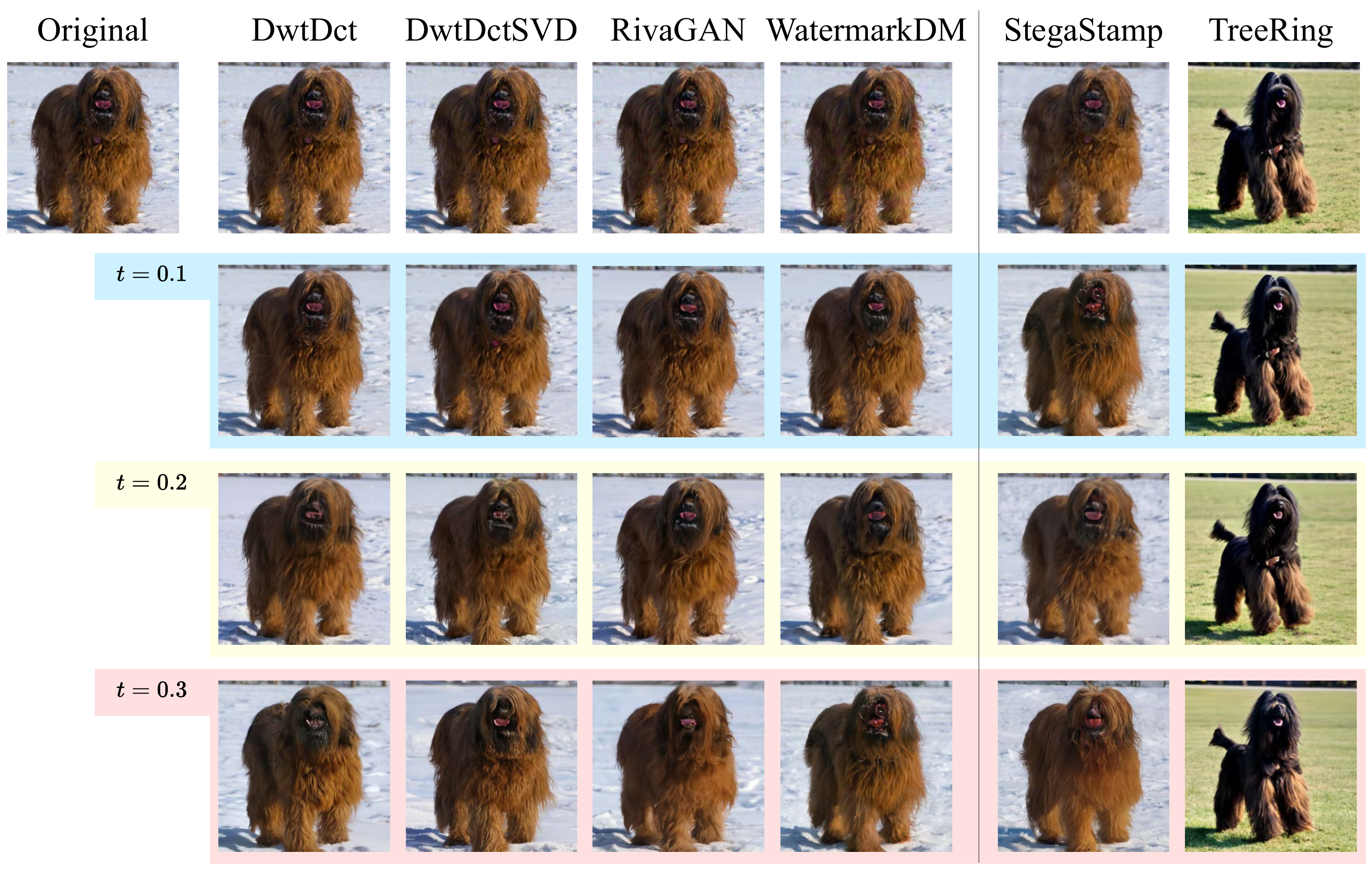}
    \vspace{-0.4cm}
    \caption{Watermarked images subjected to the image diffusion purification attack are shown with varying values of the parameter $t$. For $t=0.3$, the attack may excessively alter images, making it unsuitable for some applications.}
    \label{fig:diffpure_attack_samples}
\end{figure}

\begin{table}
\vspace{-0.5cm}
\begin{center}
\begin{tabular}{ccccccc} \toprule
\multirow{2}{*}{Method} & \multicolumn{3}{c}{PSNR} & \multicolumn{3}{c}{SSIM} \\ \cmidrule(lr){2-4} \cmidrule(lr){5-7}
& $t=0.1$ & $t=0.2$ & $t=0.3$ & $t=0.1$ & $t=0.2$ & $t=0.3$ \\\midrule
RivaGAN  & 29.77 & 26.10& 23.61 & 0.83 & 0.72 & 0.63 \\  
DwtDct  & 29.64 & 26.03  & 23.70 & 0.83 & 0.72 & 0.63\\  
DwtDctSvd  &29.69 & 26.08& 23.60& 0.83&0.72&0.63
\\ 
WatermarkDM  & 30.33 & 26.41 & 23.87 & 0.86 & 0.75 &	0.66
\\ 
MBRS  & 29.96 & 26.23 & 23.76 &0.83 & 0.73& 0.64\\  
StegaStamp   & 30.35 & 26.52 & 24.08 & 0.84 &0.73&0.64 \\  
TreeRing  &32.45 & 28.27 & 25.49 & 0.92 & 0.86& 0.81\\  \bottomrule
\end{tabular}
\caption{Analysis of the quality of images after being attacked using diffusion purification.}
\label{tab:diffpure_image_quality}
\end{center}
\end{table} 

Figure~\ref{fig:diffpure_attack_samples} showcases images that have undergone the diffusion purification attack with varying $t$ values, while Figure~\ref{fig:diffpure_roc_all} displays ROC curves for watermarked techniques under these attacks. In the low FPR regime, the TPR of all methods declines at some value of $t$. Table~\ref{tab:diffpure_image_quality} numerically measures the quality of watermarked images that are attacked using diffusion purification w.r.t. the non-attacked images. The quality of images is measured using image quality metrics such as PSNR (Peak Signal-to-Noise Ratio) and SSIM (Structural Similarity Index Measure).

Note that the quality of images for the TreeRing watermark depends on the captions that are provided for the images, and in our case, we are using simple captions based on ImageNet classes. Therefore, the images watermarked by TreeRing might exhibit dissimilarity compared to their non-watermarked counterparts. However, this does not influence the results when attacking the TreeRing watermark. Nevertheless, in Fig~\ref{fig:tree_ring_laion}, we demonstrate that our adversarial attacks on TreeRing also extend successfully to captions from LAION-captions data. 



\begin{figure}
    \centering
     \begin{minipage}[t]{.44\textwidth}
      \centering
      \includegraphics[width=1.\linewidth]{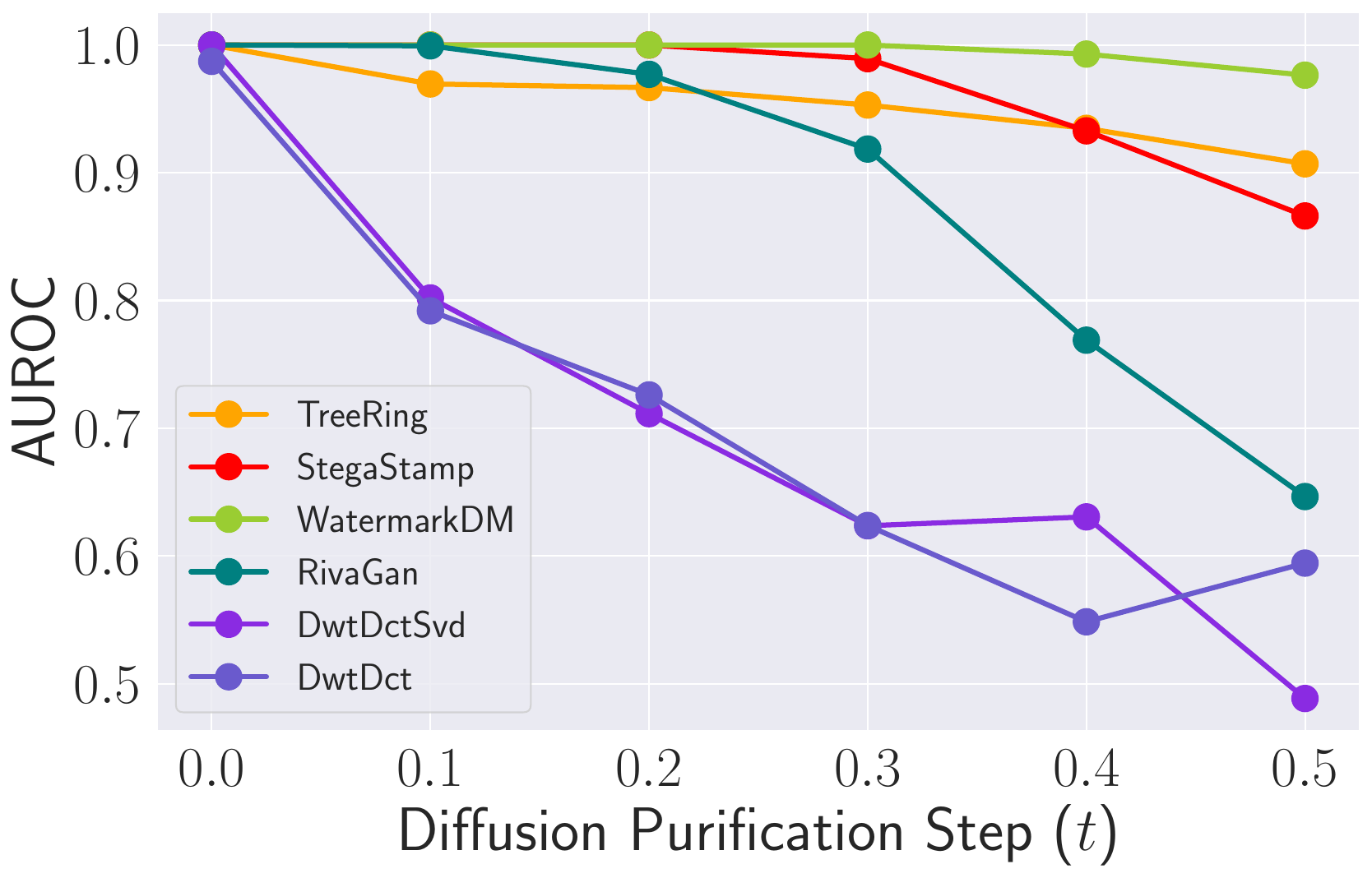}
      \vspace{-0.2cm}
      \captionof{figure}{AUROC of watermarking methods against latent diffusion purification attack w.r.t the value of $t$.}
      \label{fig:auroc_diffpure_all_latent}
    \end{minipage}
    \label{fig:diffpure_theory_emperical}
    \hspace{0.2cm}
    \begin{minipage}[t]{.44\textwidth}
      \centering
     \includegraphics[width=\textwidth]{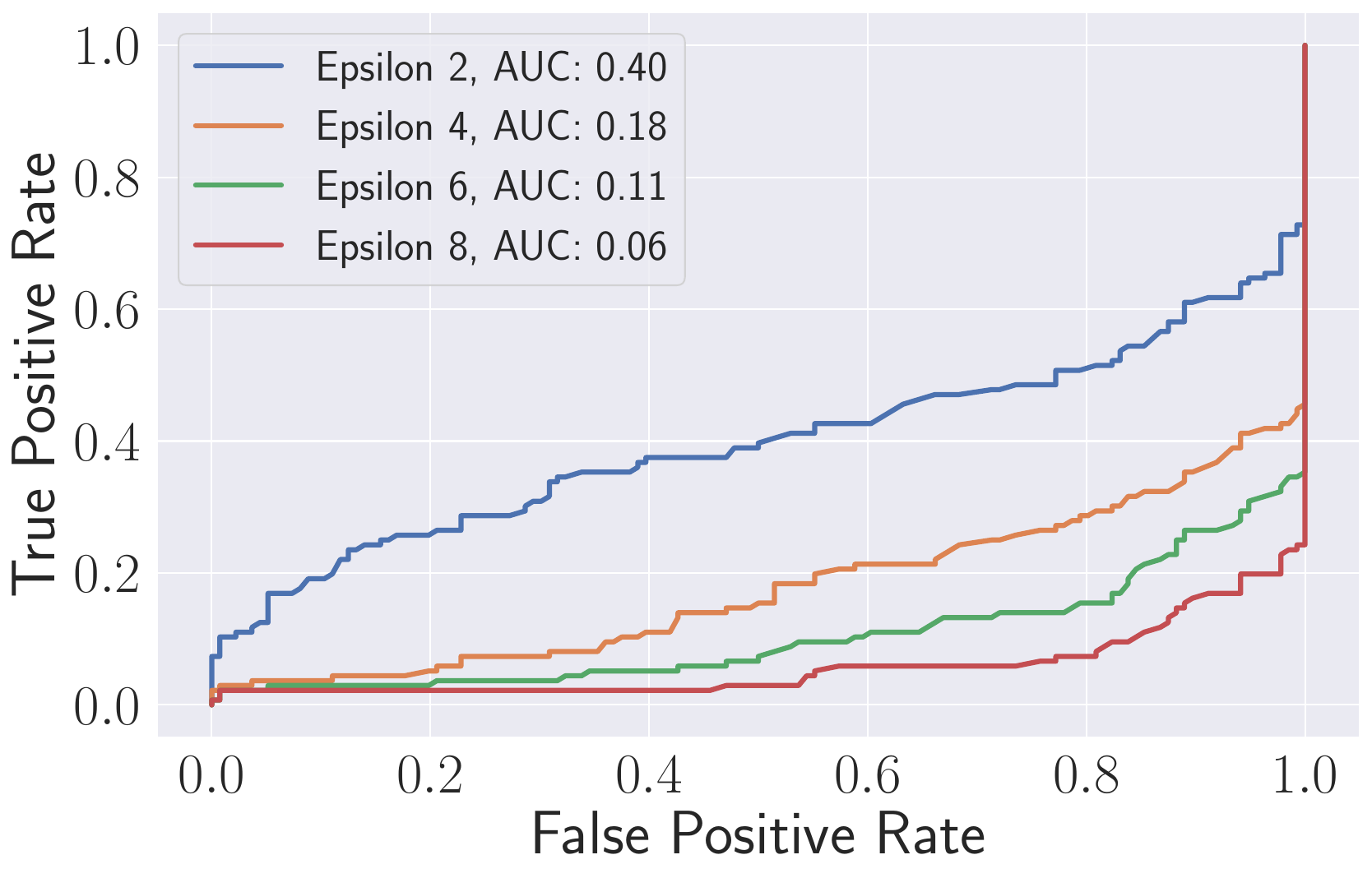}
      \vspace{-0.2cm}
      \captionof{figure}{ROC curves for attacking watermarked and non-watermarked images that are generated with text from LAION-captions with the TreeRing method.}
        \label{fig:tree_ring_laion}
      
    \end{minipage}%
\end{figure}

\begin{figure}
    \centering
    \vspace{-0.0cm}
    \includegraphics[width=1\linewidth, trim = 0.0cm 0.0cm 0.0cm 0.0cm, clip,  ]{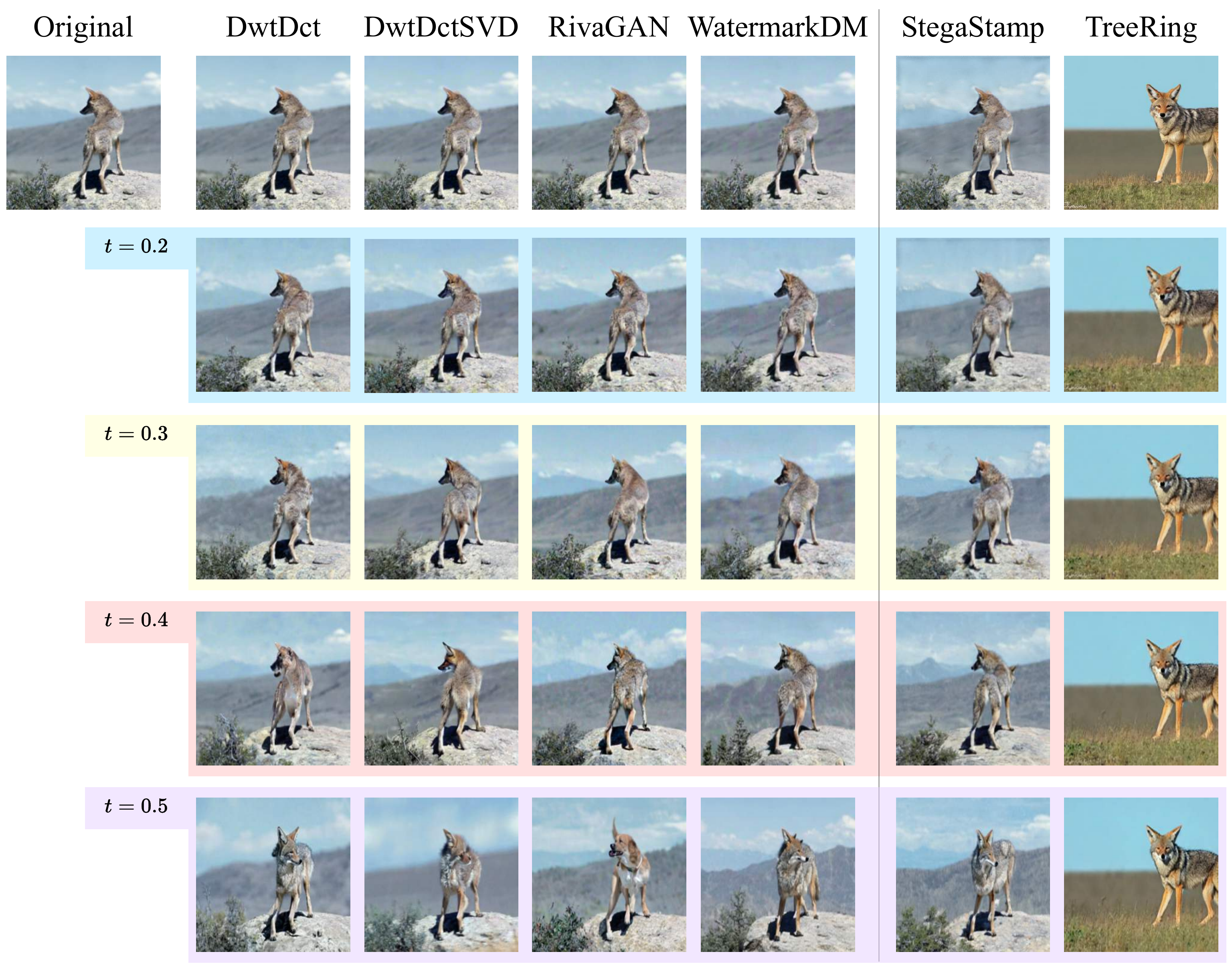}
    \vspace{-0.4cm}
    \caption{Watermarked Images subjected to the latent diffusion purification attack are shown with varying values of the parameter $t$. For $t=0.5$, the attack drastically changes the images in most cases (except for TreeRing). }
    \vspace{-0.2cm}
    \label{fig:diffpure_latent_attack_samples}
\end{figure}


\subsection{latent diffusion purification attack}
\label{appendix:latent_diffpure}

A similar bound from Theorem~\ref{theorem:diffpure_theory} can be proven for latent diffusion models. The diffusion process for a latent diffusion model consists of: mapping $x_0$ to the latent space, i.e., $z_0 = \phi(x_0)$; calculating $z^{out}_0 \sim DP_t(z_0)$ using a latent diffusion model; and mapping $z^{out}_0$ back to image space, i.e., $x^{out}_0 = \phi^{-1}(z^{out}_0)$. In this case, since the noise is applied to latent space $\phi$,  the Wasserstein distance in Theorem~\ref{theorem:diffpure_theory} will be replaced by the Wasserstein distance of the latent distributions, i.e., $\w(\R_{\phi}$, $\F_{\phi})$ with $\R_\phi$ being the distribution of images $z_0=\phi(x_0)$ where $x_0 \sim \R$, and $\F_\phi$ defined similarly.

In practice, we perform latent diffusion purification attack by employing a Text-Guided Image-to-Image Stable Diffusion model \citep{rombach2022high}, and using BLIP model \citep{li2022blip} to generate image captions, as guidance for diffusion models. Figure~\ref{fig:auroc_diffpure_all_latent} includes the AUROC of watermarking methods against this attack, and Figure~\ref{fig:diffpure_latent_attack_samples} contains samples output images for this attack.

\subsection{adversarial attack}
\label{appendix:adv_attack}

We conduct adversarial attacks involving model substitution on high-perturbation budget watermarks, specifically StegaStamp and TreeRing. Our training dataset comprises $7,500$ watermarked and $7,500$ non-watermarked images. For StegaStamp, we use images sourced from ImageNet, along with their watermarked versions, for both training and testing. In contrast, for TreeRing, the non-watermarked images can either be sourced from ImageNet or generated using a process similar to TreeRing's watermarking method, but employing random noise instead of TreeRing's key string. We have observed through empirical testing that the effectiveness of our adversarial attack remains consistent, regardless of the choice between these two types of non-watermarked training data. As a result, we opted for the latter approach. 

For StegaStamp, we employ 100-bit binary keys, mirroring the key length described in their report. In the case of TreeRing, we stick to the ring-type key employed in the original implementation. TreeRing necessitates captions for generating watermark images, and for our ImageNet data, we utilize captions structured as ``a photo of a $\langle$imagenet-class$\rangle$.'' Nevertheless, in Figure~\ref{fig:tree_ring_laion}, we demonstrate that our attacks on TreeRing also extend successfully to LAION-captions data \citep{schuhmann2021laion400m}.

Our substitute classifiers are trained for 10 epochs and receive higher than $99.8\%$ accuracy on validation data. For StegaStamp, we observed that augmenting the training data with Gaussian noise improves the transferability of the attacks on the watermark detector.

To launch adversarial attacks on images using substitute classifiers, we employ a PGD attack with 300 iterations and a step size denoted as $\alpha = 0.05 \epsilon$. Our observations indicate that adversarial perturbations for a particular watermarking method exhibit a roughly consistent pattern. Consequently, we initiate our adversarial attacks on each image from the perturbation discovered for the previous image, a technique akin to the one employed in \cite{shafahi2019adversarial}. To ensure the accurate identification of the perturbation pattern, we execute a series of preliminary warm-up attacks at the outset. Some sample adversarial images can be seen in Figure~\ref{fig:adv_attack_samples}.

In Figure~\ref{fig:tree_ring_laion}, we present ROC curves for attacking TreeRing images that are generated with text from LAION-captions data \citep{schuhmann2021laion400m}. This shows that our adversarial attack which is performed on the classifier trained on ImageNet data, generalizes to any images watermarked using the TreeRing method.

\begin{figure}
    \centering
    \vspace{-0.0cm}
    \includegraphics[width=1\linewidth, trim = 0.0cm 0.0cm 0.0cm 0.0cm, clip,  ]{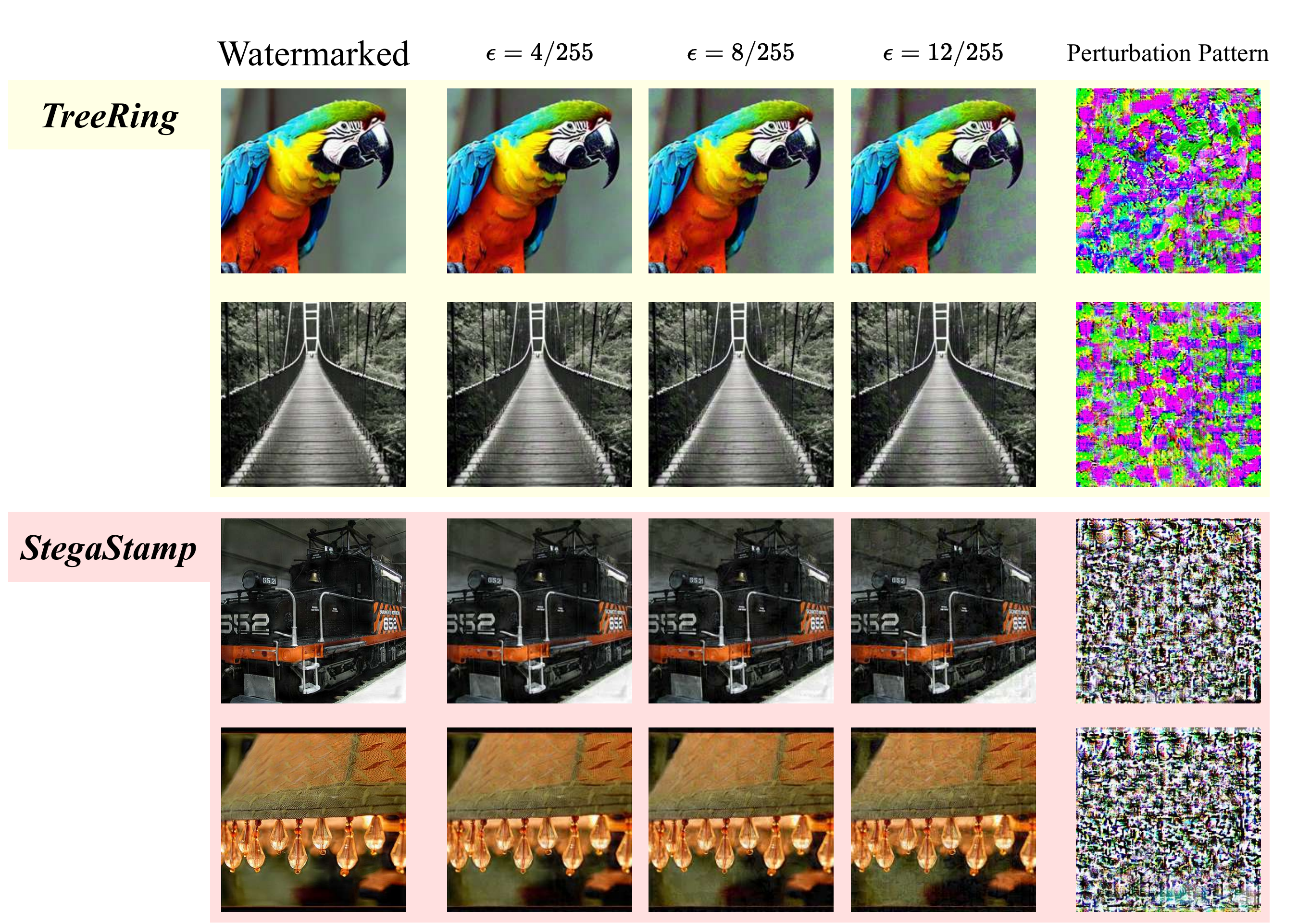}
    \vspace{-0.4cm}
    \caption{Watermarked images subjected to the model substitution adversarial attack are shown with varying values of adversarial perturbation budget $\epsilon$. Attacks on images watermarked with the same method show similar perturbation patterns. }
    \vspace{-0.2cm}
    \label{fig:adv_attack_samples}
\end{figure}

\subsection{spoofing}
\label{appendix:spoofing}

To perform the spoofing attack, we first generate random noisy images where pixels are drawn from different Gaussian distributions with varying standard deviations.
The noisy images are normalized to have values between 0 and 1. For every watermarking method that we evaluate, we apply their watermarks on these noisy images to obtain corresponding watermarked noisy images.

We use an input prompt, ``a noisy image'', along with the noisy images to generate noisy watermarked TreeRing \citep{wen2023tree} images. Once we obtain the watermarked noisy images, we do a mixup (or image blending) by adding noisy images to the clean images to spoof them. We observe that the watermark signatures in the noisy images help detect the resulting blended images as watermarked. 

We provide the pseudocode for spoofing watermarks in Algorithm~\ref{alg:spoof}.

\begin{algorithm}
\caption{Watermark Spoofing}\label{alg:spoof}
\begin{algorithmic}
\Require clean image $x$, watermarking model $\mathcal{W}$, mixup parameter $\alpha$
\State $z =$ random($x$.shape) \Comment{generate random noise with shape of image $x$}
\State $z = z - z$.min() \Comment{normalize $z$}
\State $z = z/z$.max()
\State $z = \alpha \mathcal{W}(z)$ \Comment{watermark the noise; for TreeRing, condition with text ``a noisy image''}
\State $\gamma = 1-z$.max() 
\State $x = \gamma x/x$.max() \Comment{$z+x$ can now only have a value of maximum 1}
\State \Return $x+z$ \Comment{spoofed image}
\end{algorithmic}
\end{algorithm}

\subsection{Robustness of attacks against mitigations}
\label{appendix:attack_robustness}

\begin{table}
\vspace{-0.5cm}
\begin{center}
\begin{tabular}{cccccc} 
\toprule
Method & & Base & Blur ($k=5$) & JPEG & DiffPure ($t=0.2$) \\
\midrule
\multirow{3}{*}{RivaGAN} & $t=0.1$ & 0.655 & \textbf{0.692} & 0.679 & 0.638 \\
& $t=0.2$ & \textbf{0.623} & 0.607 & 0.604 & 0.593 \\
& $t=0.3$ & \textbf{0.579} & 0.568 & 0.562 & 0.555 \\ 
\midrule
\multirow{3}{*}{DwtDct} & $t=0.1$ &\textbf{0.548} & 0.546 & 0.544 & 0.542 \\
& $t=0.2$ & \textbf{0.542} & 0.540 & 0.539 & 0.539 \\
& $t=0.3$ & \textbf{0.539} & 0.538 & 0.538 & 0.537 \\ 
\midrule
\multirow{3}{*}{DwtDctSvd} & $t=0.1$ & 0.560 & 0.566 & \textbf{0.567} & \textbf{0.567} \\
& $t=0.2$ & \textbf{0.564} & 0.560 & 0.561 & 0.558 \\
& $t=0.3$ & \textbf{0.555} & 0.553 & 0.551 & 0.549 \\ 
\midrule
\multirow{3}{*}{WatermarkDM} & $t=0.1$ & 0.876 & \textbf{0.885} & 0.805 & 0.597 \\
& $t=0.2$ & \textbf{0.644} & 0.630 & 0.604 & 0.518 \\
& $t=0.3$ & 0.568 & \textbf{0.604} & 0.565 & 0.564 \\ 
\midrule
\multirow{3}{*}{MBRS} & $t=0.1$ & \textbf{0.914} & 0.828 & 0.874 & 0.597 \\
& $t=0.2$ & 0.614 & \textbf{0.636} & 0.634 & 0.545 \\
& $t=0.3$ & 0.536 & 0.493 & 0.444 & \textbf{0.547} \\ 
\midrule
\multirow{3}{*}{StegaStamp} & $t=0.1$ & \textbf{1.000} & 1.000 & 0.998 & 0.920 \\
& $t=0.2$ & 0.966 & 0.960 & \textbf{0.971} & 0.832 \\
& $t=0.3$ & 0.781 & \textbf{0.802} & 0.767 & 0.659 \\ 
\midrule
\multirow{3}{*}{TreeRing} & $t=0.1$ & \textbf{0.996} & 0.989 & 0.947 & 0.935 \\
& $t=0.2$ & \textbf{0.976} & 0.956 & 0.923 & 0.912 \\
& $t=0.3$ & \textbf{0.928} & 0.907 & 0.871 & 0.876 \\ 
\bottomrule
\end{tabular}
\caption{The AUROC of watermarking methods against diffusion purification attack, after applying post-attack mitigations to the attacked images.}
\label{tab:diffpure_robustness}
\end{center}
\end{table} 

In this section, we measure the robustness of the diffusion purification and the model substitution adversarial attacks on image watermarking techniques. This robustness is measured by applying post-attack mitigations such as Gaussian Blur and JPEG Compression to the attacked images. A robust attack is expected to result in a low AUROC on the watermark detector, even after the post-attack mitigations are applied.

Table~\ref{tab:diffpure_robustness} showcases the AUROC of watermarking methods against diffusion purification attacks, after applying post-attack mitigations. The application of post-attack mitigations is not causing significant increases in the AUROC. This is anticipated since the primary aim of the diffusion purification attack is the removal of watermarks from the watermarked images (i.e., to achieve a bit-accuracy close to $0.5$ for both watermarked and non-watermarked images). Therefore, it is reasonable to expect that basic no-box post-attack mitigations will encounter challenges in recovering the watermark.

On the other hand, our proposed adversarial attack has black-box information about the watermark, and therefore, is able to target both non-watermarked and watermarked images for its attack, in order to increase or reduce their watermark bit-accuracy, respectively. Table~\ref{tab:adv_robustness} showcases the AUROC of watermarking methods against the adversarial attack, after applying post-attack mitigations. While post-attack mitigations, specifically DiffPure, are able to increase the AUROC in some cases, they fail to negate the effect of the attack for higher attack budgets such as $\epsilon=8/255$.


\begin{table}
\vspace{-0.5cm}
\begin{center}
\begin{tabular}{ccccccc} 
\toprule
Method & & Base & Blur ($k=5$) & Blur ($k=15$) & JPEG & DiffPure ($t=0.2$) \\
\midrule
\multirow{3}{*}{StegaStamp} & $\epsilon=4$ & \textbf{1.000} & 1.000 & 0.999 & 0.991 & 0.879 \\
& $\epsilon=8$ & \textbf{0.923} & 0.838 & 0.791 & 0.864 & 0.703 \\
& $\epsilon=12$ & 0.492 & 0.424 & 0.341 & 0.496 & \textbf{0.566} \\ 
\midrule
\multirow{3}{*}{TreeRing} & $\epsilon=4$ & 0.035 & 0.025 & 0.023 & 0.046 & \textbf{0.891} \\
& $\epsilon=8$ & 0.002 & 0.001 & 0.001 & 0.006 & \textbf{0.531} \\
& $\epsilon=12$ & 0.001 & 0.0002 & 0.0002 & 0.001 & \textbf{0.074} \\ 
\bottomrule
\end{tabular}
\caption{The AUROC of watermarking methods against model substitution adversarial attack, after applying post-attack mitigations to the attacked images.}
\label{tab:adv_robustness}
\end{center}
\end{table} 

\section{Proof of Theorem~\ref{theorem:diffpure_theory}}
\label{proof:diffpure_theory}
\begin{statement}
    The sum of evasion and spoofing errors of a watermark detector $D$ on distributions $\R^t$ and $\F^t$ is lower bounded as follows:
    $$e_0(\F^t,D) + e_1(\R^t,D) \geq  1 -\erf(\frac{\sqrt{\bar{\alpha_t}}\ \w(\R, \F)}{2\sqrt{2(1-\bar{\alpha_t})}}).$$
\end{statement}
\begin{proof}

Let $\psi_\sigma(\cdot)$ denote a concave upper bound on the total variation between two noise distributions $\N(x_1, \sigma)$ and $\N(x_2, \sigma)$ as a function of the distance $\|x_1 - x_2\|$ between the corresponding images, i.e.,
\begin{equation}
\label{eq:tv_bnd_watermark}
    \tv \big( \N(x_1, \sigma), \N(x_2, \sigma) \big) \leq \psi_\sigma \big( \|x_1 - x_2\| \big),
\end{equation}
where $\tv$ is the total variation of two distributions.

Note that a concave upper bound like this always exists for any noise distribution $\N$.
This is because the total variation of the noise distributions for two images goes from zero to one as the distance between them in the latent space increases.
Thus a trivial bound could be obtained by simply considering the convex hull of the region under the curve of the total variation with respect to the distance.
In the case where $\N$ is an isometric Gaussian and the distance is measured using the $\ell_2$-norm, this bound takes the form of the Gauss error function, more precisely:
\begin{align}
\label{eq:l2_psi_eq_watermark}
    \psi_\sigma (\|x_1 - x_2\| ) = \erf \left( \frac{\|x_1 - x_2\| }{2\sqrt{2}\sigma} \right)
\end{align}

Now, consider the distribution of images under the noise distribution $\N$.
Let $\R_\N$ be the distribution of images $\tilde{x} \sim \N(x, \sigma)$ where $x \sim \R$.
Similarly, define $\F_\N$. The same equality as Equation~\ref{eq:l2_psi_eq_watermark} can be written for the Wasserstein distance of $\R$ and $\F$ defined with respect to $\ell_2$ norm, when $x_1$ and $x_2$ are sampled from $\R$ and $\F$, respectively.
\begin{align}
\label{eq:wass_psi_eq_watermark}
    \psi_\sigma \big( \w(\R, \F) \big) = \erf \left( \frac{ \w(\R, \F) }{2\sqrt{2}\sigma} \right).
\end{align}

We bound the total variation of the noisy distributions $\R_\N$ and $\F_\N$ in terms of the Wasserstein distance between the original distributions $\R$ and $\F$.
The reason why this bound holds is that as $\R$ and $\F$ get closer to each other, $\R_\N$ and $\F_\N$ start to overlap due to the noise distribution $\N$ around them. 
\begin{lemma}
\label{lem:err_tv_wassertein_bnd}
    The total variation of $\R_\N$ and $\F_\N$, and hence, the success rate of any detector $D$ on these distributions, is upper bounded by a function of the Wasserstein distance of the original distributions $\R$ and $\F$ as follows:
    \[ 1 - (e_0(\F_\N,D) + e_1(\R_\N,D)) \leq \tv(\R_\N, \F_\N) \leq \psi_\sigma \big( \w(\R, \F) \big). \]
\end{lemma}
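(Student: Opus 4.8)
The plan is to prove the two inequalities separately: the left one follows directly from the definitions of the errors and of total variation, while the right one is the crux and relies on a coupling argument combined with the joint convexity of total variation and Jensen's inequality.

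For the left inequality, I would introduce the detector's acceptance region $A = \{x : D(x) = 1\}$ and rewrite the error terms as set measures. Since $e_0(\F_\N, D) = \mathbb{P}_{x \sim \F_\N}[D(x)=0] = 1 - \F_\N(A)$ and $e_1(\R_\N, D) = \mathbb{P}_{x \sim \R_\N}[D(x)=1] = \R_\N(A)$, we obtain $1 - (e_0(\F_\N,D) + e_1(\R_\N,D)) = \F_\N(A) - \R_\N(A)$. Because $\tv(\R_\N, \F_\N) = \sup_{B} |\F_\N(B) - \R_\N(B)|$ over measurable sets $B$, this particular difference is at most $\tv(\R_\N, \F_\N)$, which is the desired bound. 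This step is routine.

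For the right inequality, I would fix an optimal coupling $\gamma \in \Gamma(\R, \F)$ achieving $\mathbb{E}_{(x_1,x_2)\sim\gamma}[\|x_1 - x_2\|] = \w(\R, \F)$ (or, to sidestep any existence concern, a sequence of couplings whose transport cost converges to the infimum and then pass to the limit using continuity of $\psi_\sigma$). The key observation is that the noised distributions admit a common mixture representation driven by $\gamma$: since the marginals of $\gamma$ are $\R$ and $\F$, we have $\R_\N = \int \N(x_1, \sigma)\, d\gamma(x_1, x_2)$ and $\F_\N = \int \N(x_2, \sigma)\, d\gamma(x_1, x_2)$. Invoking the joint convexity of total variation lets me pull the $\tv$ inside the integral, giving $\tv(\R_\N, \F_\N) \leq \int \tv\big(\N(x_1, \sigma), \N(x_2, \sigma)\big)\, d\gamma(x_1, x_2)$. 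Applying the pointwise bound of Equation~\ref{eq:tv_bnd_watermark} bounds the integrand by $\psi_\sigma(\|x_1 - x_2\|)$, and then Jensen's inequality together with the concavity of $\psi_\sigma$ yields $\int \psi_\sigma(\|x_1 - x_2\|)\, d\gamma \leq \psi_\sigma\big(\mathbb{E}_\gamma[\|x_1 - x_2\|]\big) = \psi_\sigma(\w(\R, \F))$.

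The main obstacle I anticipate is justifying the two analytic ingredients cleanly, namely the joint convexity of total variation (so that the $\tv$ of the mixtures is dominated by the average of the fiberwise $\tv$'s) and the validity of the mixture representation with respect to a single coupling. A secondary technical point is the existence of an optimal coupling; I would handle this by either appealing to standard existence results for the Wasserstein distance on $\mathbb{R}^d$ with the $\ell_2$ norm, or by the near-optimal coupling limiting argument, relying on $\psi_\sigma$ being continuous and nondecreasing (which follows from its concavity and from its taking values in $[0,1]$).
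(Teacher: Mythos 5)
Your proposal is correct and follows essentially the same route as the paper: the left inequality via the detector's acceptance region as a witness event for the total-variation supremum, and the right inequality via an (near-)optimal coupling, the mixture representation of $\R_\N$ and $\F_\N$, the fiberwise bound $\tv(\N(x_1,\sigma),\N(x_2,\sigma)) \leq \psi_\sigma(\|x_1-x_2\|)$, and Jensen's inequality with concavity of $\psi_\sigma$. The only cosmetic difference is that you invoke joint convexity of total variation as a named fact, whereas the paper derives that step inline by exchanging the supremum over events with the expectation over the coupling.
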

\begin{proof}
For simplicity of the proof, assume $D$ to be deterministic, however, the proof can be generalized for randomized detectors too. Define $E_D = \{x : D(x)=1\}$. Based on the definition of total variation,
\begin{align*}
\tv(\R_\N, \F_\N) &= \sup_E \big|\mathbb{P}_{\tilde{x}_1 \sim \R_\N}[\tilde{x}_1 \in E] - \mathbb{P}_{\tilde{x}_2 \sim \F_\N}[\tilde{x}_2 \in E] \big|\\
& \geq \big|\mathbb{P}_{\tilde{x}_1 \sim \R_\N}[\tilde{x}_1 \in E_D] - \mathbb{P}_{\tilde{x}_2 \sim \F_\N}[\tilde{x}_2 \in E_D] \big|\\
&=\big| e_1(\R_\N, D) - \big(1 - e_0(\F_\N, D)\big) \big| \tag{Definition~\ref{def:error_definition}} \\
& \geq 1 - (e_0(\F_\N,D) + e_1(\R_\N,D)).
\end{align*}

Furthermore, the inequality $\tv(\R_\N, \F_\N) \leq \psi_\sigma \big( \w(\R, \F) \big)$ can be derived from the proof presented for Lemma~\ref{lem:tv_wassertein_bnd} in Appendix~\ref{proof:tv_wassertein_bnd}, by substituting the latent function $\phi$ with the identity function.

\end{proof}
In Lemma~\ref{lem:err_tv_wassertein_bnd}, we have shown that after applying Gaussian noise to $\R$ and $\F$, they become more indistinguishable. However, using Gaussian noise as an attack against image watermarks will degrade the quality of images. Therefore, we utilize denoising diffusion models to remove the added noise. Since the bound in Lemma~\ref{lem:err_tv_wassertein_bnd} is on total variation, by applying a denoising function on the noisy distributions $\R_{\N}$ and $\F_{\N}$, the bound still holds.
Note that our theoretical results do not rely on the utilization of denoising diffusion models, and any arbitrary denoising technique \citep{elad2023image, wang2022zero}, can be used to achieve similar bounds.

Let $\R^t_{\N}$ be the distribution of $x_t \sim \mathcal{N}(\sqrt{\bar{\alpha}_t}x_0, (1-\bar{\alpha}_t) I)$ where $x_0 \sim \R$, and define $\F^t_{\N}$ similarly. Additionally, define $G^{t}(.)$ as the function that performs denoising process to $\R^t_{\N}$ and $\F^t_{\N}$ (i.e., samples of $\R^t$ come from $x^{out}_0 \sim G^{t}(x_t)$ where $x_t \sim \R^{t}_{\N}$).

We use Lemma~\ref{lem:err_tv_wassertein_bnd}, to get an upper bound on the total variation of $\R^t_{\N}$ and $\F^t_{\N}$, with $\sigma = \sqrt{(1 - \bar{\alpha_t})}$, based on the definition of $\R^t_{\N}$ and $\F^t_{\N}$:
\begin{align*}
\tv(\R^t_{\N}, \F^t_{\N}) & \leq \psi_{\sigma} \big( \w(\R, \F) \big) \\
& =  \erf(\frac{\sqrt{\bar{\alpha_t}}\ \w(\R, \F)}{2\sqrt{2(1-\bar{\alpha_t})}}).   \tag{Equation~\ref{eq:wass_psi_eq_watermark}}\\
\end{align*}
Next, we use the fact that after applying the function $G^t(.)$ on samples from $\R^t_{\N}$ and $\F^t_{\N}$, the total variation does not increase, i.e.

\begin{equation}
\label{eq:tv_noise_denoise}
    \tv(\R^t, \F^t) \leq \tv(\R^t_{\N}, \F^t_{\N}).
\end{equation}

Now, the theorem's statement can be proven as follows:
\begin{align*}
\tv(\R^t, \F^t) \leq \tv(\R^t_{\N}, \F^t_{\N}) \leq &\ \erf(\frac{\sqrt{\bar{\alpha_t}}\ \w(\R, \F)}{2\sqrt{2(1-\bar{\alpha_t})}})  \\
1 - (e_0(\F^t,D) + e_1(\R^t,D)) \leq &\ \erf(\frac{\sqrt{\bar{\alpha_t}}\ \w(\R, \F)}{2\sqrt{2(1-\bar{\alpha_t})}})\tag{Lemma~\ref{lem:err_tv_wassertein_bnd}} \\
e_0(\F^t,D) + e_1(\R^t,D) \geq &\ 1 - \erf(\frac{\sqrt{\bar{\alpha_t}}\ \w(\R, \F)}{2\sqrt{2(1-\bar{\alpha_t})}}).
\end{align*}

We note that inequality~\ref{eq:tv_noise_denoise} can be written for any arbitrary denoising function that receives noisy images of $R^t_\N$ and $F^t_\N$ as inputs, and outputs denoised images with acceptable image quality.

\end{proof}

\section{proof of Theorem~\ref{thm:rob_vs_rel}}
\label{proof:rob_vs_rel}
\begin{statement}
    The performance of a $(\sigma, \alpha)$-robust detector measured using its $\auc$ is upper bounded as follows:
    \[\auc(D) \leq \frac{1}{1-\alpha} 
 \left(\psi_\sigma(\w_\phi(\R, \F)) - \frac{\psi_\sigma(\w_\phi(\R, \F))^2}{2}\right) + \frac{1+2\alpha - 2\alpha^2}{2(1-\alpha)},\]
\end{statement}

\begin{proof}

We quantify the dissimilarity between the distributions $\R$ and $\F$ using the Wasserstein metric defined with respect to a norm $\|\cdot\|$ in the latent space $\mathbb{R}^l$ as follows:
\begin{equation}
\label{eq:Wasserstein_def}
    \w_\phi(\R, \F) = \inf_{\gamma \in \Gamma(\R, \F)} \mathbb{E}_{(x_1, x_2) \sim \gamma} \big[ \|\phi(x_1) - \phi(x_2)\| \big],
\end{equation}
where $\Gamma(\R, \F)$ is the set of all joint probability distributions of $\R$ and $\F$, i.e.,
\begin{align*}
    \Gamma(\R, \F) = \bigg\{\gamma: \mathbb{R}^d \times \mathbb{R}^d \rightarrow \mathbb{R}_{\geq 0} \; & \bigg| \int_{\mathbb{R}^d} \gamma(x_1, x_2)dx_2 = \pdf_\R(x_1)\\
    &\text{and } \int_{\mathbb{R}^d} \gamma(x_1, x_2)dx_1 = \pdf_\F(x_2) \bigg\},
\end{align*}
where $\pdf_\R$ and $\pdf_\F$ represent the probability density functions of $\R$ and $\F$.
For the sake of simplicity, we assume that there exists an element $\gamma^* \in \Gamma$ that achieves the infimum.
Otherwise, one can derive our results for some $\gamma^*$ that achieves an expected distance of $\w_\phi(\R, \F) + \delta$ for an arbitrarily small $\delta > 0$.

We use the notation $\psi_\sigma(\cdot)$ to represent a concave upper bound on the total variation between two noise distributions, specifically $\N(\phi(x_1), \sigma)$ and $\N(\phi(x_2), \sigma)$. This upper bound is expressed as a function of the distance $\|\phi(x_1) - \phi(x_2)\|$ between the respective images in the latent space, i.e.,
\begin{equation}
\label{eq:tv_bnd}
    \tv \big( \N(\phi(x_1), \sigma), \N(\phi(x_2), \sigma) \big) \leq \psi_\sigma \big( \|\phi(x_1) - \phi(x_2)\| \big).
\end{equation}
In the case where $\N$ is an isometric Gaussian and the distance is measured using the $\ell_2$-norm, this bound takes the form of the Gauss error function, more precisely:
\[\psi_\sigma \big( \|\phi(x_1) - \phi(x_2)\|_2 \big) = \erf \left( \frac{\|\phi(x_1) - \phi(x_2)\|_2}{2\sqrt{2}\sigma} \right).\]

Now, consider the distribution of noisy real images in the latent space under the noise distribution $\N$.
Let $\R^\phi_\N$ be the distribution of latent representations $\tilde{\phi} \sim \N(\phi(x), \sigma)$ where $x \sim \R$.
Similarly, define $\F^\phi_\N$.
The following lemma relates the performance of a $(\sigma, \alpha)$-robust detector $D$ under the original and noisy versions of the two distributions.
\begin{lemma}
\label{lem:auc_relation}
    The $\auc$ of a $(\sigma, \alpha)$-robust detector $D$ on the original distributions $\R$ and $\F$ is bounded by that for the noisy versions of the distributions $\R^\phi_\N$ and $\F^\phi_\N$ as follows:
    \[\auc(D) \leq \frac{\auc_\N(D)}{1-\alpha} + \alpha.\]
\end{lemma}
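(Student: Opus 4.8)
The plan is to reduce everything to the detector's class-conditional acceptance rates and then let the $(\sigma,\alpha)$-robustness carry us from the clean distributions $\R,\F$ to their noisy latent counterparts $\R^\phi_\N,\F^\phi_\N$. Writing $\tpr = \mathbb{P}_{x\sim\F}[D(\phi(x))=1]$ and $\fpr = \mathbb{P}_{x\sim\R}[D(\phi(x))=1]$ (these are $1-e_0(\F,D)$ and $e_1(\R,D)$ in the notation of Definition~\ref{def:error_definition}), I would express the detector's performance through these two numbers: for a detector producing hard labels, $\auc(D)=\tfrac12(1+\tpr-\fpr)$ and likewise $\auc_\N(D)=\tfrac12(1+\tpr_\N-\fpr_\N)$ for the noisy distributions. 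The lemma then becomes a comparison of $(\tpr_\N,\fpr_\N)$ against $(\tpr,\fpr)$, which is exactly what robustness controls.

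First I would bound the noisy true-positive rate from below. Conditioning on the clean prediction and applying Definition~\ref{def:robust_detector} with $k=1$, $\PD=\F$,
\[\tpr_\N = \mathbb{P}_{x\sim\F,\,\tilde\phi\sim\N(\phi(x),\sigma)}[D(\tilde\phi)=1] \ge \mathbb{P}[D(\tilde\phi)=1\mid D(\phi(x))=1]\,\mathbb{P}[D(\phi(x))=1] \ge (1-\alpha)\,\tpr.\]
Symmetrically, applying the definition with $k=0$, $\PD=\R$ gives $1-\fpr_\N \ge (1-\alpha)(1-\fpr)$, i.e. $\fpr_\N \le (1-\alpha)\fpr+\alpha$. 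Substituting both into $\auc_\N(D)=\tfrac12(1+\tpr_\N-\fpr_\N)$ and factoring yields $\auc_\N(D)\ge(1-\alpha)\,\auc(D)$, which rearranges to the claim (in fact to the slightly stronger $\auc(D)\le \auc_\N(D)/(1-\alpha)$, and the stated additive $\alpha$ is then free since $\alpha\ge 0$). Equivalently, and more robustly to the precise AUROC convention, one can condition directly on the ``decision-preserving'' event, whose probability is at least $1-\alpha$ by robustness: on that event the noisy detector reproduces the clean detector's behaviour, which is the source of the $1/(1-\alpha)$ normalization, while the complementary event (probability at most $\alpha$) contributes at most $\alpha$ to the lost performance, producing the additive term.

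The step I expect to be most delicate is matching the robustness guarantee, which is stated per single image, to $\auc$, which is intrinsically a two-sample quantity comparing a draw from $\F$ against a draw from $\R$. A naive union bound over both samples would cost $2\alpha$ rather than $\alpha$; the clean way around this is to route the argument through the one-sided acceptance rates $\tpr,\fpr$ as above, so that robustness for the positive class and for the negative class is invoked separately and never simultaneously on the same pair. Once Lemma~\ref{lem:auc_relation} is in hand, Theorem~\ref{thm:rob_vs_rel} is finished by bounding $\auc_\N(D)$ purely in terms of $\tv(\R^\phi_\N,\F^\phi_\N)$, then invoking the total-variation-to-Wasserstein estimate $\tv(\R^\phi_\N,\F^\phi_\N)\le\psi_\sigma(\w_\phi(\R,\F))$ (the latent-space analogue of Lemma~\ref{lem:err_tv_wassertein_bnd}), and substituting into $\auc(D)\le \auc_\N(D)/(1-\alpha)+\alpha$.
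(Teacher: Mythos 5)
Your two class-conditional bounds are exactly the ones the paper derives inside this lemma: conditioning on the clean prediction via the law of total probability and invoking Definition~\ref{def:robust_detector} gives $\tpr_\N \geq (1-\alpha)\tpr$, and the symmetric argument for $k=0$, $\PD=\R$ gives your $\fpr_\N \leq (1-\alpha)\fpr + \alpha$ (slightly tighter than the paper's $\fpr_\N \leq \fpr + \alpha$). That part is correct and is the probabilistic core of the lemma.

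The gap is in how you convert these into a bound on $\auc$. You write $\auc(D) = \tfrac12(1+\tpr-\fpr)$, which presupposes a hard-label detector whose ROC ``curve'' is a single operating point. The paper's $\auc$ --- both in this lemma and where the lemma is consumed in Theorem~\ref{thm:rob_vs_rel} --- is the area under a genuine threshold-swept ROC curve, $\auc_\N(D) = \int_0^1 \tpr_\N \, d\fpr_\N$ (the detector outputs a confidence score and the threshold varies; this is also what makes the downstream bound $\auc_\N \leq \tfrac12 + tv - tv^2/2$ meaningful). Under that convention your algebraic substitution is unavailable, and the additive $\alpha$ is not ``free'': the paper applies your two pointwise bounds at every threshold and pushes them through the integral, and the $+\alpha$ arises precisely from the change of integration variable, $\int_0^1 \tpr \, d\fpr_\N \geq \int_0^{1-\alpha} \tpr \, d\fpr \geq \int_0^1 \tpr \, d\fpr - \alpha$ (using $\tpr \leq 1$), after which $\auc_\N \geq (1-\alpha)(\auc - \alpha)$. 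Your fallback remark about conditioning on a ``decision-preserving event'' gestures at this but is not an argument; in particular it does not by itself explain why the loss is $\alpha$ rather than $2\alpha$ once both classes are in play --- the paper avoids the union bound exactly as you suggest, by keeping the TPR degradation multiplicative and the FPR degradation additive and letting the ROC integral absorb them asymmetrically, but that integral manipulation is the step your proposal omits. So the claim as stated in the paper's setting is not yet proved; what you have proved is the hard-label special case.
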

Proof is available in Appendix~\ref{proof:auc_relation}.

Next, we bound the total variation between the noisy distributions $\R^\phi_\N$ and $\F^\phi_\N$ in terms of the Wasserstein distance between the original distributions $\R$ and $\F$.
The reason why this bound holds is that as $\R$ and $\F$ get closer to each other in the latent space, $\R^\phi_\N$ and $\F^\phi_\N$ start to overlap due to the noise distribution $\N$ around them.
\begin{lemma}
\label{lem:tv_wassertein_bnd}
    The total variation between the noisy distributions $\R^\phi_\N$ and $\F^\phi_\N$ is bounded by the Wasserstein distance of the original distributions $\R$ and $\F$ as follows:
    \[\tv(\R^\phi_\N, \F^\phi_\N) \leq \psi_\sigma \big( \w_\phi(\R, \F) \big).\]
\end{lemma}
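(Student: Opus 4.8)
The plan is to prove the bound by combining an optimal coupling of $\R$ and $\F$ with the joint convexity of total variation and a final application of Jensen's inequality. First I would invoke the coupling $\gamma^* \in \Gamma(\R, \F)$ that achieves the infimum in the definition of $\w_\phi$ (whose existence is assumed earlier in the proof). Using $\gamma^*$, both noisy latent distributions can be written as mixtures over the coupling, since the $x_1$-marginal of $\gamma^*$ is $\R$ and the $x_2$-marginal is $\F$: for any measurable event $E$ in the latent space,
\[
\R^\phi_\N(E) = \mathbb{E}_{(x_1,x_2)\sim\gamma^*}\big[\N(\phi(x_1),\sigma)(E)\big], \qquad \F^\phi_\N(E) = \mathbb{E}_{(x_1,x_2)\sim\gamma^*}\big[\N(\phi(x_2),\sigma)(E)\big].
\]

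Next I would bound the difference $|\R^\phi_\N(E) - \F^\phi_\N(E)|$ by pushing the absolute value inside the expectation via the triangle inequality, and then bounding each per-sample difference by the pointwise total variation, which is uniform over $E$. This gives
\[
|\R^\phi_\N(E) - \F^\phi_\N(E)| \leq \mathbb{E}_{(x_1,x_2)\sim\gamma^*}\big[\tv(\N(\phi(x_1),\sigma), \N(\phi(x_2),\sigma))\big].
\]
Because the right-hand side no longer depends on $E$, taking the supremum over $E$ yields $\tv(\R^\phi_\N, \F^\phi_\N) \leq \mathbb{E}_{\gamma^*}[\tv(\N(\phi(x_1),\sigma), \N(\phi(x_2),\sigma))]$; this is precisely the joint convexity of total variation specialized to these mixtures.

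I would then apply the pointwise concave bound from Equation~\ref{eq:tv_bnd}, replacing the inner total variation by $\psi_\sigma(\|\phi(x_1)-\phi(x_2)\|)$, and finish with Jensen's inequality. Since $\psi_\sigma$ is concave, Jensen lets me pull the expectation inside its argument:
\[
\mathbb{E}_{(x_1,x_2)\sim\gamma^*}\big[\psi_\sigma(\|\phi(x_1)-\phi(x_2)\|)\big] \leq \psi_\sigma\Big(\mathbb{E}_{(x_1,x_2)\sim\gamma^*}\big[\|\phi(x_1)-\phi(x_2)\|\big]\Big) = \psi_\sigma(\w_\phi(\R,\F)),
\]
where the final equality uses that $\gamma^*$ realizes the Wasserstein distance. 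Chaining these inequalities establishes the claim.

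The main obstacle, and the reason the concavity hypothesis on $\psi_\sigma$ is essential, is the direction of Jensen's inequality in the last step: I need to control $\psi_\sigma$ of the \emph{average} latent distance rather than the average of $\psi_\sigma$-values, and concavity is exactly what makes the inequality point the right way while connecting the expected coupling distance to $\w_\phi(\R,\F)$. The remaining ingredients — the mixture representation and the convexity-of-TV step — are routine once the optimal coupling is fixed; the only subtlety is ensuring the supremum over events $E$ can be moved outside the coupling expectation, which is legitimate precisely because the per-sample total variation bound holds uniformly in $E$.
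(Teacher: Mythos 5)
Your proposal is correct and follows essentially the same route as the paper's proof: rewrite both noisy distributions as mixtures over the optimal coupling $\gamma^*$, bound the total variation by the expected per-pair total variation (joint convexity), apply the pointwise bound of Equation~\ref{eq:tv_bnd}, and finish with Jensen's inequality using the concavity of $\psi_\sigma$ to land on $\psi_\sigma(\w_\phi(\R,\F))$. Your closing remarks on why concavity is essential and why the supremum over $E$ can be moved outside the coupling expectation match the paper's reasoning exactly.
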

Proof is available in Appendix~\ref{proof:tv_wassertein_bnd}.

Now, we use the above two lemmas to put a bound on the performance of the detector on $\R$ and $\F$.
We first show that the performance on the noisy distributions $\R^\phi_\N$ and $\F^\phi_\N$ is bounded by the total variation between these distributions.
We then use Lemma~\ref{lem:tv_wassertein_bnd} to convert this total variation distance to the Wasserstein distance between the original distributions $\R$ and $\F$.
Finally, we relate the bound to the detector's performance on the original distributions using Lemma~\ref{lem:auc_relation}.

    The true positive rate $\tpr_\N$ and the false positive rate $\fpr_\N$ of the detector on the noisy distributions $\R^\phi_\N$ and $\F^\phi_\N$ can be bounded by the total variation between these distributions as follows:
    \begin{align*}
        |\tpr_\N - \fpr_\N| &= |\mathbb{P}_{x \sim \F, \tilde{\phi} \sim \N(\phi(x), \sigma)} [D(\tilde{\phi}) = 1] - \mathbb{P}_{x \sim \R, \tilde{\phi} \sim \N(\phi(x), \sigma)} [D(\tilde{\phi}) = 1]|\\
        &=\tv(\R^\phi_\N, \F^\phi_\N)
    \end{align*}
    Since the true positive rate is also bounded by one, we have:
    \[\tpr_\N \leq \min (\fpr_\N + \tv(\R^\phi_\N, \F^\phi_\N), 1).\]
    Denoting $\fpr_\N, \tpr_\N$ and $\tv(\R^\phi_\N, \F^\phi_\N)$ with $x, y$, and $tv$, respectively, for brevity, we bound the $\auc_\N$ as follows:
    \begin{align*}
        \auc_\N(D) = \int_0^1 y dx &\leq \int_0^1 \min(x + tv, 1) dx\\
        &= \int_0^{1 -tv} (x + tv) dx + \int_{1-tv}^1 dx\\
        &= \left| \frac{x^2}{2} + tvx \right|_0^{1-tv} + \left| x \right|_{1-tv}^1\\
        &= \frac{(1-tv)^2}{2} + tv(1-tv) + tv\\
        &= \frac{1}{2} + \frac{tv^2}{2} - tv + tv - tv^2 + tv\\
        &= \frac{1}{2} + tv - \frac{tv^2}{2}.
    \end{align*}
    Thus,
    \begin{align*}
        \auc_\N(D) &= \frac{1}{2} + \tv(\R^\phi_\N, \F^\phi_\N) - \frac{\tv(\R^\phi_\N, \F^\phi_\N)^2}{2}\\
        &\leq \frac{1}{2} + \psi_\sigma \big( \w_\phi(\R, \F) \big) - \frac{\psi_\sigma \big( \w_\phi(\R, \F) \big)^2}{2}. \tag{from Lemma~\ref{lem:tv_wassertein_bnd} and since $1/2 + x - x^2/2$ is increasing in $[0, 1]$}\\
    \end{align*}
    Finally, from Lemma~\ref{lem:auc_relation}, we have:
    \begin{align*}
        \auc(D) &\leq \frac{\auc_\N(D)}{1-\alpha} + \alpha\\
        &\leq \frac{1}{1-\alpha} \left( \frac{1}{2} + \psi_\sigma \big( \w_\phi(\R, \F) \big) - \frac{\psi_\sigma \big( \w_\phi(\R, \F) \big)^2}{2} \right) + \alpha \tag{from above}\\
        &= \frac{1}{1-\alpha} \left(\psi_\sigma(\w(\R, \F)) - \frac{\psi_\sigma(\w(\R, \F))^2}{2}\right) + \frac{1+2\alpha - 2\alpha^2}{2(1-\alpha)}.
    \end{align*}
\end{proof}

\section{Proof of Lemma~\ref{lem:auc_relation}}
\label{proof:auc_relation}
\begin{statement}
    The $\auc$ of a $(\sigma, \alpha)$-robust detector $D$ on the original distributions $\R$ and $\F$ is bounded by that for the noisy versions of the distributions $\R^\phi_\N$ and $\F^\phi_\N$ as follows:
    \[\auc(D) \leq \frac{\auc_\N(D)}{1-\alpha} + \alpha.\]
\end{statement}
\begin{proof}
    Let $\tpr, \fpr, \tpr_\N$ and $\fpr_\N$ denote the true and false positive rates of the detector on the original and noisy distributions, respectively, assuming the fake distribution as the positive class. Then, by definition,
    \[\auc_\N(D) = \int_0^1 \tpr_\N \; d\fpr_\N. \]
    Now, to relate this to $\auc$, we lower bound $\tpr_\N$ and upper bound $\fpr_\N$ in terms of $\tpr$ and $\fpr$.
    \begin{align*}
        \tpr_\N &= \mathbb{P}_{x \sim \F, \tilde{\phi} \sim \N(\phi(x), \sigma)} [D(\tilde{\phi}) = 1]\\
        &= \mathbb{P}_{x \sim \F, \tilde{\phi} \sim \N(\phi(x), \sigma)} [D(\tilde{\phi}) = 1 | D(\phi(x)) = 1] \mathbb{P}_{x \sim \F}[D(\phi(x)) = 1]\\
        &+ \mathbb{P}_{x \sim \F, \tilde{\phi} \sim \N(\phi(x), \sigma)} [D(\tilde{\phi}) = 1 | D(\phi(x)) = 0] \mathbb{P}_{x \sim \F}[D(\phi(x)) = 0] \tag{law of total probability}\\
        &\geq (1-\alpha) \mathbb{P}_{x \sim \F}[D(\phi(x)) = 1] \tag{from Equation~\ref{eq:robust_detector}}\\
        &= (1-\alpha) \tpr.
    \end{align*}

    \begin{align*}
        \fpr_\N &= \mathbb{P}_{x \sim \R, \tilde{\phi} \sim \N(\phi(x), \sigma)} [D(\tilde{\phi}) = 1]\\
        &= \mathbb{P}_{x \sim \R, \tilde{\phi} \sim \N(\phi(x), \sigma)} [D(\tilde{\phi}) = 1 | D(\phi(x)) = 1] \mathbb{P}_{x \sim \R}[D(\phi(x)) = 1]\\
        &+ \mathbb{P}_{x \sim \R, \tilde{\phi} \sim \N(\phi(x), \sigma)} [D(\tilde{\phi}) = 1 | D(\phi(x)) = 0] \mathbb{P}_{x \sim \R}[D(\phi(x)) = 0] \tag{law of total probability}\\
        &\leq \mathbb{P}_{x \sim \R}[D(\phi(x)) = 1]\\
        &+(1 - \mathbb{P}_{x \sim \R, \tilde{\phi} \sim \N(\phi(x), \sigma)} [D(\tilde{\phi}) = 0 | D(\phi(x)) = 0]) \mathbb{P}_{x \sim \R}[D(\phi(x)) = 0]\\
        &\leq \fpr + \alpha \mathbb{P}_{x \sim \R}[D(\phi(x)) = 0] \tag{from Equation~\ref{eq:robust_detector}}\\
        &\leq \fpr + \alpha.
    \end{align*}

    Therefore,
    \begin{align*}
        \auc_\N(D) &= \int_0^1 \tpr_\N \; d\fpr_\N\\
        &\geq \int_0^1 (1-\alpha) \tpr \; d\fpr_\N \tag{$\tpr_\N \geq (1-\alpha) \tpr$}\\
        &= (1-\alpha) \int_0^1 \tpr \; d\fpr_\N\\
        &\geq (1-\alpha) \int_0^{1-\alpha} \tpr \; d\fpr \tag{$\fpr_\N \leq \fpr + \alpha$}\\
        &\geq (1-\alpha) \left(\int_0^1 \tpr \; d\fpr - \alpha\right)\\
        &= (1-\alpha) (\auc - \alpha).
    \end{align*}
    Hence,
    \[\auc(D) \leq \frac{\auc_\N(D)}{1-\alpha} + \alpha.\]
\end{proof}

\section{Proof of Lemma~\ref{lem:tv_wassertein_bnd}}
\label{proof:tv_wassertein_bnd}
\begin{statement}
    The total variation between the noisy distributions $\R^\phi_\N$ and $\F^\phi_\N$ is bounded by the Wasserstein distance of the original distributions $\R$ and $\F$ as follows:
    \[\tv(\R^\phi_\N, \F^\phi_\N) \leq \psi_\sigma \big( \w_\phi(\R, \F) \big).\]
\end{statement}
\begin{proof}
    By definition of total variation, we have:
    \begin{align*}
        \tv(\R^\phi_\N, \F^\phi_\N) &= \sup_E \big|\mathbb{P}_{\tilde{\phi}_1 \sim \R^\phi_\N}[\tilde{\phi}_1 \in E] - \mathbb{P}_{\tilde{\phi}_2 \sim \F^\phi_\N}[\tilde{\phi}_2 \in E] \big|\\
        &= \sup_E \big|\mathbb{P}_{x_1 \sim \R, \tilde{\phi}_1 \sim \N(\phi(x_1), \sigma)}[\tilde{\phi}_1 \in E]\\
        &\quad\quad\quad- \mathbb{P}_{x_2 \sim \F, \tilde{\phi}_2 \sim \N(\phi(x_2), \sigma)}[\tilde{\phi}_2 \in E] \big| \tag{definition of $\R^\phi_\N$ and $\F^\phi_\N$}\\
        &= \sup_E \big|\mathbb{P}_{(x_1, x_2) \sim \gamma^*, \tilde{\phi}_1 \sim \N(\phi(x_1), \sigma)}[\tilde{\phi}_1 \in E]\\
        &\quad\quad\quad- \mathbb{P}_{(x_1, x_2) \sim \gamma^*, \tilde{\phi}_2 \sim \N(\phi(x_2), \sigma)}[\tilde{\phi}_2 \in E] \big| \tag{since $\gamma^*$ has marginals $\R$ and $\F$}\\
        &= \sup_E \left|\mathbb{E}_{(x_1, x_2) \sim \gamma^*} \left[ \mathbb{P}_{\tilde{\phi}_1 \sim \N(\phi(x_1), \sigma)}[\tilde{\phi}_1 \in E] - \mathbb{P}_{\tilde{\phi}_2 \sim \N(\phi(x_2), \sigma)}[\tilde{\phi}_2 \in E] \right] \right|\\
        &\leq \sup_E \mathbb{E}_{(x_1, x_2) \sim \gamma^*} \left| \mathbb{P}_{\tilde{\phi}_1 \sim \N(\phi(x_1), \sigma)}[\tilde{\phi}_1 \in E] - \mathbb{P}_{\tilde{\phi}_2 \sim \N(\phi(x_2), \sigma)}[\tilde{\phi}_2 \in E] \right| \tag{since $|a+b| \leq |a| + |b|$}\\
        &\leq \mathbb{E}_{(x_1, x_2) \sim \gamma^*} \left[ \tv \big( \N(\phi(x_1), \sigma), \N(\phi(x_2), \sigma) \big) \right] \tag{by definition of total variation}\\
        &\leq \mathbb{E}_{(x_1, x_2) \sim \gamma^*} \left[ \psi_\sigma \big( \|\phi(x_1) - \phi(x_2)\| \big) \right] \tag{from Equation~\ref{eq:tv_bnd}}\\
        &\leq \psi_\sigma \big( \mathbb{E}_{(x_1, x_2) \sim \gamma^*} \left[\|\phi(x_1) - \phi(x_2)\| \right] \big) \tag{since $\psi_\sigma$ is concave and Jensen's inequality}\\
        &= \psi_\sigma \big( \w_\phi(\R, \F) \big). \tag{from definition of $\gamma^*$ and Equation~\ref{eq:Wasserstein_def}}\\
    \end{align*}
\end{proof}

\begin{figure}
    \centering
    \begin{minipage}[t]{.48\textwidth}
      \centering
      \includegraphics[width=1.\linewidth, trim = 0.45cm 0.1cm 0.45cm 0.1cm, clip, ]{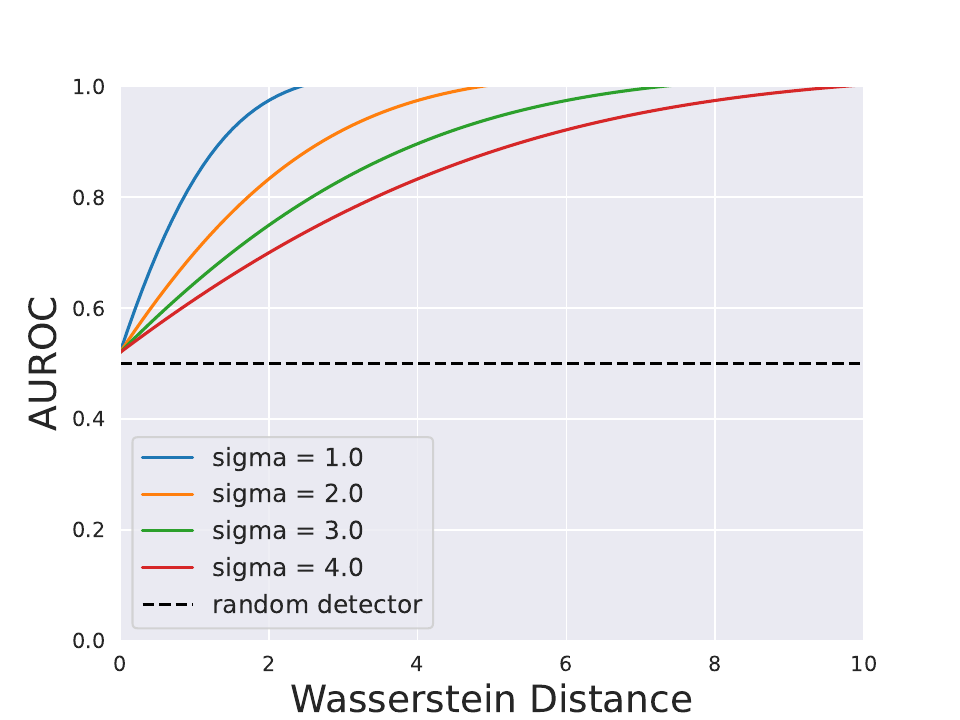}
      \captionof{figure}{Deepfake detection performance bound w.r.t Wasserstein distance between real $\R$ and fake $\F$ distributions for different values of $\sigma$. A more robust detector (higher $\sigma$) has a lower performance.}
      \label{fig:bnd_plot}
    \end{minipage}%
    \hspace{0.2cm}
    \begin{minipage}[t]{.48\textwidth}
      \centering
      \includegraphics[width=\textwidth, trim = 0.5cm 0.7cm 0.45cm 0.10cm, clip, ]{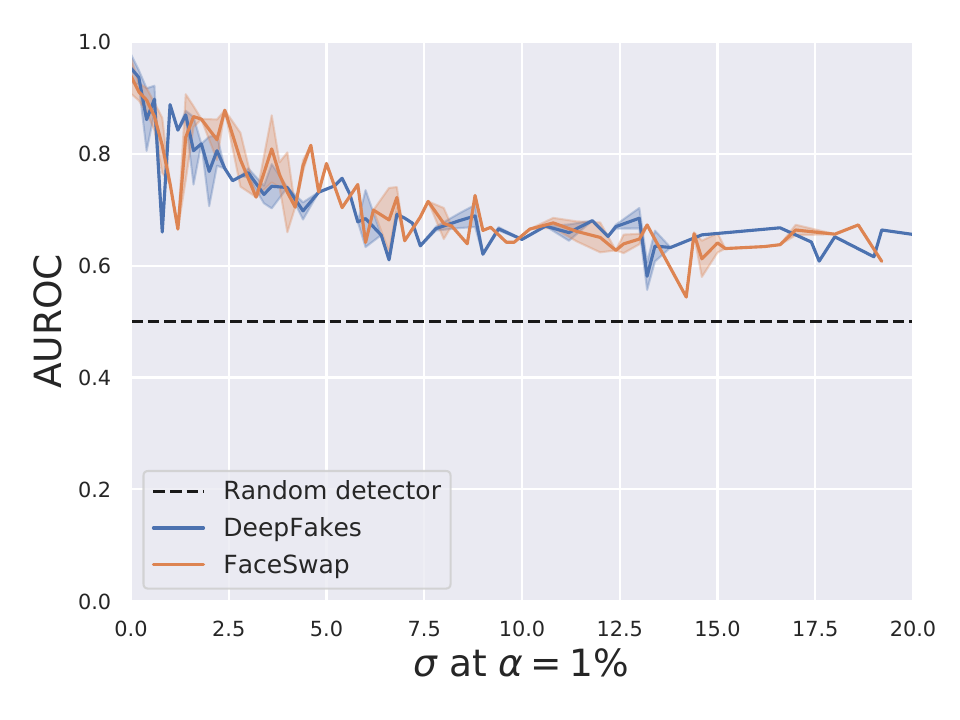}
      \captionof{figure}{AUROC vs. $\sigma$ plot for a $(\sigma, \alpha=0.01)$-robust deep fake detector
      with a VGG-16-BN backbone on DeepFakes \citep{deepfakes} and FaceSwap \citep{faceswap} datasets.
      Consistent with Theorem~\ref{thm:rob_vs_rel}, AUROC drops as the robustness of the detector increases.}
      \label{fig:tradeoff-auroc-sigma-exp-vgg}
    \end{minipage}
\end{figure}
\vspace{-0.3cm}

\begin{figure}[h]
     \centering
     \begin{subfigure}[b]{0.48\textwidth}
         \centering
         \includegraphics[width=\textwidth]{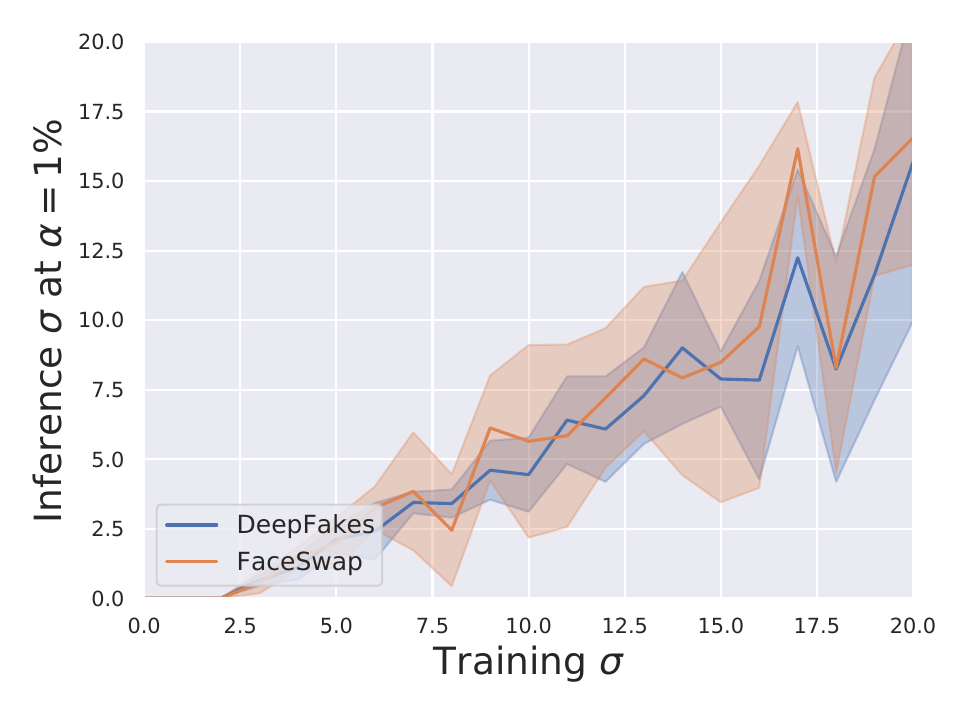}
         \caption{ResNet-18}
         \label{fig:inf-sigma-resnet}
     \end{subfigure}
     \hfill
     \begin{subfigure}[b]{0.48\textwidth}
         \centering
         \includegraphics[width=\textwidth]{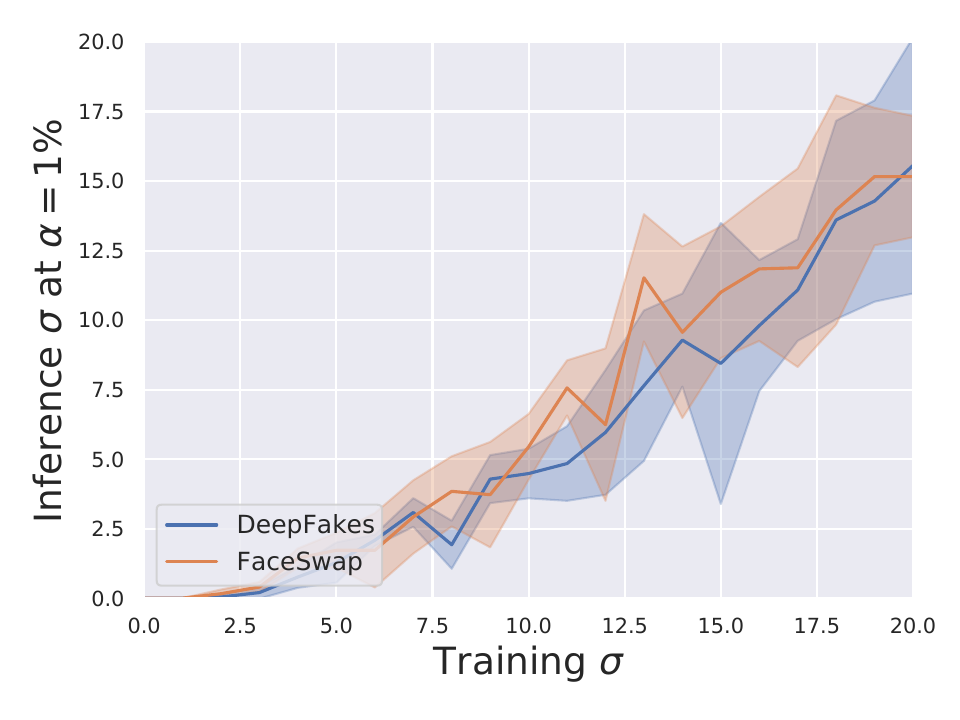}
         \caption{VGG-16-BN}
         \label{fig:inf-sigma-vgg}
     \end{subfigure}
        \caption{Detector robustness (inference $\sigma$ at $\alpha=1\%$) to random noise in the $\phi$ latent space increases as the standard deviation of noise used for training increases. Various robust detectors are trained by adding Gaussian noise of standard deviation between 0 and 20 to the $\phi$ latent space. Y-axes represent the standard deviation of the noise at inference time on the test dataset for which the detector achieves $\alpha=0.01$ as per Equation \ref{eq:robust_detector}.}
        \label{fig:inf-sigma}
\end{figure}

\section{More Details on Deepfake Detector Experiments}
\label{app:deepfake-experiments}

Theorem~\ref{thm:rob_vs_rel} provides a robustness-reliability trade-off for deepfake detectors. Figure~\ref{fig:auc_vs_sig} shows how the AUROC reduces with robustness for different Wasserstein distances based on our bound. Figure~\ref{fig:bnd_plot} shows how the AUROC reduces with Wasserstein distance for various noise values $\sigma$. We perform experiments on the FaceForensics++ dataset hosted by \citet{roessler2019faceforensicspp} to empirically verify our theoretical insights. FaceForensics++ \citep{roessler2019faceforensicspp} is a forensic dataset that consists of 1000 video sequences that are manipulated using different automated face manipulation techniques\footnote{\url{https://github.com/ondyari/FaceForensics/}}.For our experiments, we use frames from videos that are manipulated using FaceSwap \citep{faceswap} and Deepfakes \citep{deepfakes}. FaceSwap manipulations are based on classical computer graphics-based methods, while DeepFakes relies on a learning-based approach. We perform a set of preprocessing steps to extract aligned $228\times 228$ face images from the videos using the DeepFakes software\footnote{\url{https://github.com/deepfakes/faceswap}}. We randomly sample 5 frames from each video. We ensure that our final image datasets have no overlap of identities between the training and test splits. After preprocessing, our FaceSwap image dataset contains 4316 (1059, respectively) original and 3529 (1857, respectively) manipulated images in the training (test, respectively) dataset. Similarly, our DeepFakes image dataset contains 4316 (1059, respectively) original and 3522 (1843, respectively) manipulated images in the training (test, respectively) dataset. 

We train different detectors with the standard deviation of noise $\sigma$ varied from 0 to 20 with the following objective
\begin{align*}
    \min_{\theta}~\frac{1}{N}\sum_{i=1}^N \ell(D(\phi(x_i)+n_i), y_i)
\end{align*}
where $\ell$ is the cross-entropy loss, $n_i \sim \mathcal{N}(0, \sigma^2 I)$, and $\theta$ represent the parameters that defines $D$.
For different detectors, we compute the inference $\sigma$ on the test dataset at which they achieve an $\alpha$ of 0.01 using Equation \ref{eq:robust_detector}. Figure \ref{fig:inf-sigma} shows that the detector robustness (inference $\sigma$ at $\alpha=1\%$) to random noise increases as the training sigma increases. We use ten randomly sampled Gaussian noises for each sample $\phi(x)$ for this evaluation. Figures~\ref{fig:tradeoff-auroc-sigma-exp}~and~\ref{fig:tradeoff-auroc-sigma-exp-vgg} plots the empirical trade-off between AUROC and robustness ($\sigma$ at $\alpha=1\%$) for detectors with ResNet-18 and VGG-16-BN backbones, respectively, on the DeepFakes and FaceSwap datasets.

\begin{figure}[t]
    \centering
    \includegraphics[width=0.8\linewidth]{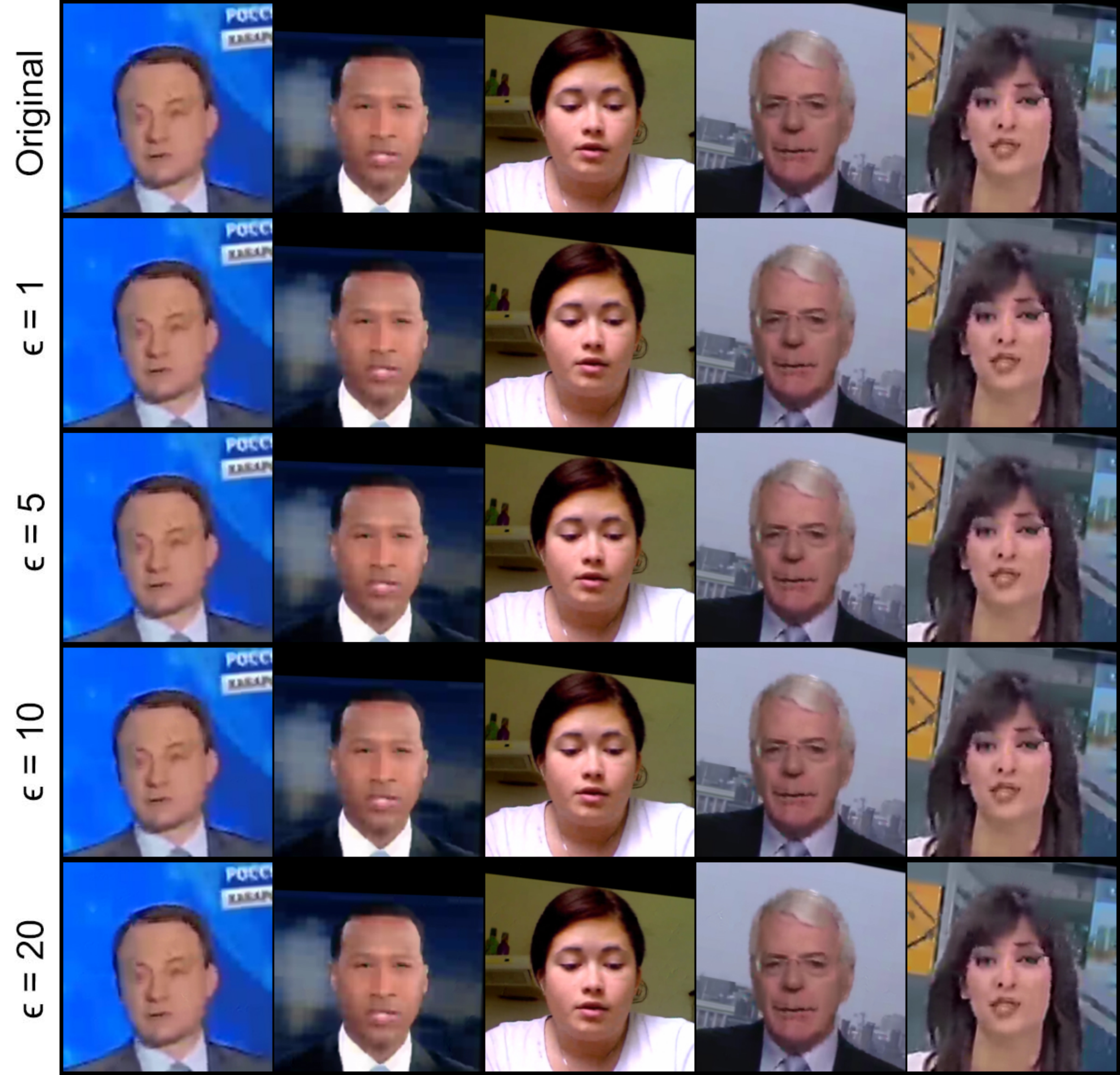}
    \caption{We use ResNet-18 and the FaceSwap dataset to visualize images that correspond to noisy latent space features. We optimize Equation \ref{eq:opt1} to find additive noises in the image space that cause large $\ell_2$ perturbations in the latent space $\phi$. In top row, we show the original images from the FaceSwap dataset. The rest of the rows show noisy images that produce perturbations corresponding $\epsilon$ in the latent space. Here, we show that small additive noises in the image space can lead to large perturbations in the $\phi$ space. }
    \label{fig:grid_latent_space}
\end{figure}

We also visualize how the noisy latent space vectors would look in the image space (see Figure \ref{fig:grid_latent_space}). We optimize the following objective to find such images:
\begin{align}
\min_{\delta} ~\big(\epsilon - \|\phi(x)-\phi(x+\delta)\|_2\big)^2
\label{eq:opt1}
\end{align}
In the above optimization problem, we find an additive noise $\delta$ when added to a clean image $x$ leads to an $\ell_2$ perturbation of $\epsilon$ in the latent space.  As shown in Figure \ref{fig:grid_latent_space}, FaceSwap images with small perturbations in the input space can cause large perturbations in the latent space $\phi$ of ResNet-18.

\end{document}